\documentclass{article}

\usepackage{microtype}
\usepackage{graphicx}
\usepackage{subcaption}
\usepackage{booktabs} 

\usepackage{hyperref}

\usepackage[accepted]{icml2026}

\usepackage{hyperref}
\usepackage{url}
\usepackage{booktabs}
\usepackage{microtype}
\usepackage{xcolor}
\usepackage{subcaption}
\usepackage[most]{tcolorbox}
\usepackage{wrapfig}

\usepackage{amsmath,amsthm,amssymb,bm}
\usepackage{amsfonts}
\usepackage{thmtools} 
\usepackage{bbm}
\usepackage{lipsum}
\usepackage{nicefrac}
\usepackage{algorithm}
\usepackage{algorithmic}
\renewcommand{\algorithmiccomment}[1]{\hfill{\color{lightgray} $\triangleright$~#1}}

\theoremstyle{plain}  
\newtheorem{theorem}{Theorem}[section]  

\newtheoremstyle{remarkstyle}
  {} 
  {} 
  {} 
  {} 
  {\bfseries} 
  {.} 
  {.5em} 
  {} 
  
\theoremstyle{remarkstyle} 

\definecolor{mygray}{gray}{0.95}
\definecolor{mygray}{gray}{0.95}

\usepackage{color,soul}
\colorlet{mygreen}{green!55!black}
\definecolor{myyellow1}{HTML}{D89A3C}
\colorlet{myyellow}{myyellow1!90!black}
\colorlet{metablue}{blue!60!green}
\definecolor{nicerblue}{HTML}{417481} 

\usepackage{empheq}

    {%
        \end{gathered}\end{equation}
    }

\def\1{\bm{1}}

\DeclareMathAlphabet{\mathsfit}{\encodingdefault}{\sfdefault}{m}{sl}
\SetMathAlphabet{\mathsfit}{bold}{\encodingdefault}{\sfdefault}{bx}{n}

\newcommand{\pdata}{p_{\rm{data}}}

\newcommand{\KL}{\mathrm{KL}}

\DeclareMathOperator*{\argmin}{argmin}

\usepackage{xspace}

\usepackage{multirow}
\usepackage{colortbl}      

\definecolor{RoyalBlue}{rgb}{0.25, 0.41, 0.88}

\newcommand{\cellgray}{\cellcolor{gray!12}}
\newcommand{\piold}{\pi_{\theta_{\text{old}}}}
\newcommand{\piref}{\pi_{\text{ref}}}

\newcommand{\pitheta}{\pi_{\theta}}

\newcommand{\vtheta}{\bm{v}_{\theta}}
\newcommand{\vref}{\bm{v}_{\text{ref}}}
\newcommand{\vold}{\bm{v}_{\text{old}}}
\newcommand{\rhotheta}{\rho_{\theta}}
\newcommand{\rhothetasg}{\operatorname{sg}(\rhotheta)}

\colorlet{metablue}{blue!60!green}
\hypersetup{
    colorlinks,
    linkcolor={metablue},
    citecolor={metablue},
    urlcolor={metablue}
}

\usepackage{tikz}
\usetikzlibrary{positioning}

\usepackage{color,soul}

\usepackage{pifont}
\usepackage{ifsym}
\colorlet{myred}{red!85!black}

\usepackage{enumitem} 
\usepackage[capitalise]{cleveref}

\crefname{equation}{}{}
\crefname{figure}{Fig.}{Figs.}
\crefname{section}{Sec.}{Secs.}
\crefname{appendix}{App.}{Apps.}
\crefname{table}{Tab.}{Tabs.}
\crefname{theorem}{Thm.}{Thms.}
\crefname{lemma}{Lem.}{Lems.}
\crefname{assumption}{Assump.}{Assumps.}
\crefname{proposition}{Prop.}{Props.}
\crefname{corollary}{Cor.}{Cors.}
\crefname{definition}{Def.}{Defs.}
\crefname{remark}{Rmk.}{Rmks.}
\theoremstyle{remark}
\crefname{algocf}{Alg.}{Algs.}
\crefname{algorithm}{Alg.}{Algs.}

\theoremstyle{plain}

\usepackage[textsize=tiny]{todonotes}

\icmltitlerunning{Rethinking the Design Space of Reinforcement Learning for Diffusion Models}

\begin{document}

\twocolumn[
  \icmltitle{Rethinking the Design Space of Reinforcement Learning for Diffusion Models: On the Importance of Likelihood Estimation Beyond Loss Design}

  \icmlsetsymbol{equal}{*}

  \begin{icmlauthorlist}
    \icmlauthor{Jaemoo Choi}{yyy,equal}
    \icmlauthor{Yuchen Zhu}{yyy,equal}
    \icmlauthor{Wei Guo}{yyy}
    \icmlauthor{Petr Molodyk}{yyy}
    \icmlauthor{Bo Yuan}{yyy}
    \icmlauthor{Jinbin Bai}{comp}
    \icmlauthor{Yi Xin}{sch}
    \\
    \icmlauthor{Molei Tao}{yyy}
    \icmlauthor{Yongxin Chen}{yyy}
  \end{icmlauthorlist}

  \icmlaffiliation{yyy}{Georgia Institute of Technology}
  \icmlaffiliation{comp}{National University of Singapore}
  \icmlaffiliation{sch}{Nanjing university}
\icmlcorrespondingauthor{Jaemoo Choi}{jchoi843@gatech.edu}
  \icmlcorrespondingauthor{Yongxin Chen}{yongchen@gatech.edu}

  \icmlkeywords{Machine Learning, ICML}
  \vspace{-0.5em}
    \begin{center}
    \texttt{Code:} \href{https://github.com/jaemoo-choi/rethinking-diffusion-rl}
    {\texttt{https://github.com/jaemoo-choi/rethinking-diffusion-rl}}
    \end{center}

  \vskip 0.3in
]

\printAffiliationsAndNotice{}  

\begin{abstract}
Reinforcement learning has been widely applied to diffusion and flow models for visual tasks such as text-to-image generation. However, these tasks remain challenging because diffusion models have intractable likelihoods, which creates a barrier for directly applying popular policy-gradient type methods. Existing approaches primarily focus on crafting new objectives built on already heavily engineered LLM objectives, using ad hoc estimators for likelihood, without a thorough investigation into how such estimation affects overall algorithmic performance.  In this work, we provide a systematic analysis of the RL design space by disentangling three factors: i) policy-gradient objectives, ii) likelihood estimators, and iii) rollout sampling schemes. We show that adopting an evidence lower bound (ELBO) based model likelihood estimator, computed only from the final generated sample, is the dominant factor enabling effective, efficient, and stable RL optimization, outweighing the impact of the specific policy-gradient loss functional. We validate our findings across multiple reward benchmarks using SD 3.5 Medium, and observe consistent trends across all tasks. Our method improves the GenEval score from $0.24$ to $0.95$ in $90$ GPU hours, which is $4.6\times$ more efficient than FlowGRPO and $2\times$ more efficient than the SOTA method DiffusionNFT without reward hacking. 
\end{abstract}

\begin{figure*}[t]
    \centering
    \begin{subfigure}[t]{0.62 \textwidth}
      \centering
      \includegraphics[width=\linewidth]{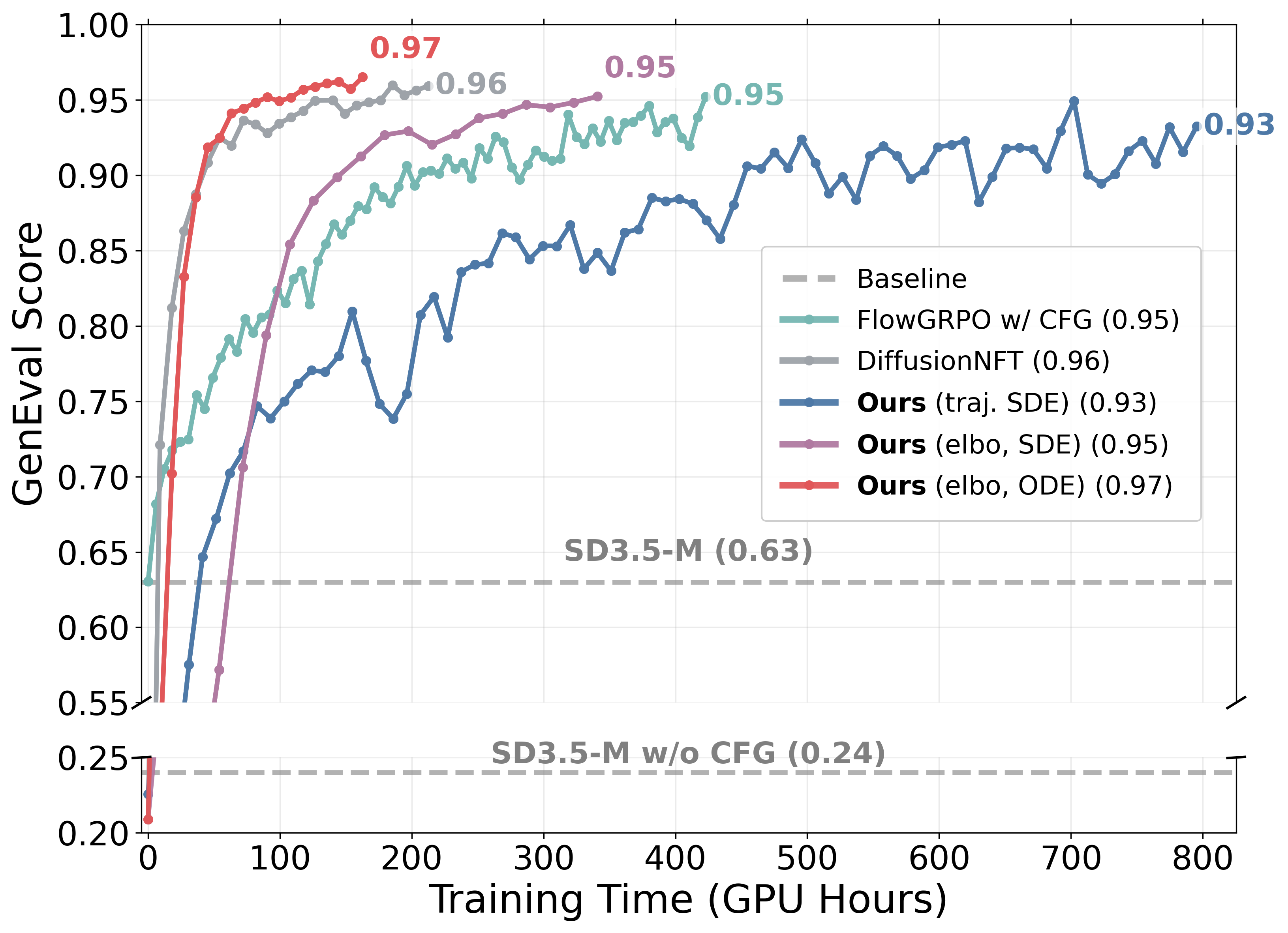}
      \label{fig:left}
    \end{subfigure}
    \begin{subfigure}[t]{0.28\textwidth}
      \centering
        \includegraphics[width=\linewidth]{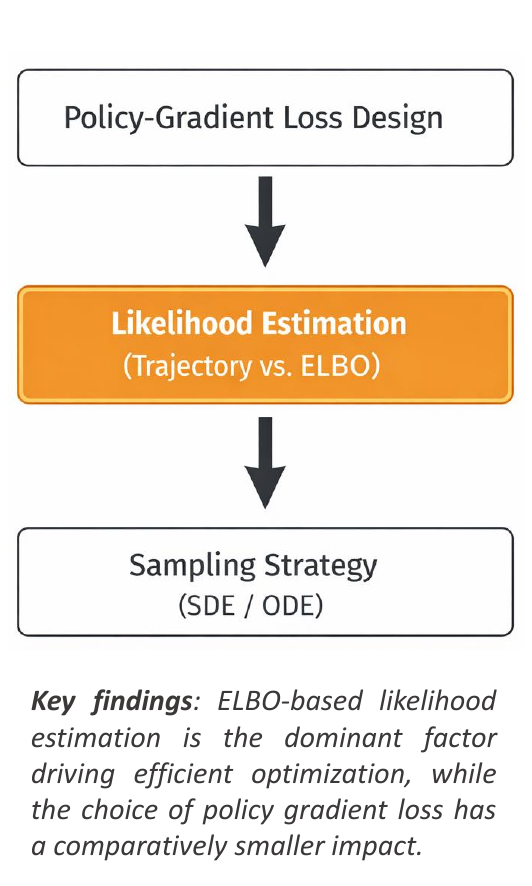}
      \label{fig:right}
    \end{subfigure}
    \vspace{-10pt}
    \caption{\textbf{Training efficiency and design-space analysis for reward-based diffusion fine-tuning.} \textit{(Left)} GenEval performance across training time for various fine-tuning methods on SD3.5-Medium. 
    \textit{(Right)} Conceptual summary of the design space considered in this work, highlighting policy-gradient loss design, likelihood estimation, and sampling strategy.}
\end{figure*}

\section{Introduction}
Diffusion and flow models \citep{ho2020denoising, song2020score, liu2022flow, lipman2022flow} have become a dominant paradigm for text-to-image \citep{esser2024scaling, flux2024} or text-to-video \citep{wan2025wan} synthesis, enabling strong generative performance through iterative denoising processes. There has been growing interest in post-training diffusion models \citep{black2023training, fan2023dpok,domingo2024adjoint,liu2025flow, zheng2025diffusionnft}, where external reward signals, such as human preferences \citep{wu2023human, xu2023imagereward, hessel2021clipscore}, or task-specific objectives \citep{ghosh2023geneval} are used to guide generation toward desired outcomes. This line of work offers a flexible alternative to supervised fine-tuning \citep{zhang2023adding} and has the potential to support fine-grained control and alignment without curated datasets.

Most attempts at reinforcement learning (RL) for diffusion models \citep{fan2023dpok, liu2025flow} approach the problem by making minor modifications to PPO \cite{schulman2017proximal} or GRPO \cite{shao2024deepseekmath} style policy-gradient objectives, with the ambition to replicate the tremendous, widely-seen success of RL in enhancing LLMs \citep{guo2025deepseek, team2025kimi} in visual tasks.  However, using policy gradients for diffusion models is fundamentally challenging, due to the fact that policy gradient methods require exact, efficiently computable likelihoods (which fit autoregressive LLM naturally), yet diffusion models fail to provide as not being a likelihood-based generative model \citep{song2020score, benton2024denoising}. 

FlowGRPO \citep{liu2025flow}, as the first successful practice in adapting GRPO to image generation tasks, fully inherits the loss objective of the GRPO and uses Gaussian transition from discretized reverse SDE sampling trajectories to estimate model likelihood. This naturally requires storing the entire sampling path; therefore, it is memory- and compute-intensive, leading to slow convergence. More recently, several works \citep{xue2025advantage, zheng2025diffusionnft} have shown that RL fine-tuning can instead be performed by operating only on the final generated sample, leading to substantial reductions in memory usage and improvements in computational efficiency. While these works have made important breakthroughs, their studies are largely empirical, and the mechanisms underlying their effectiveness have not been systematically analyzed. In particular, we seek answers to the following questions in this paper:

\begin{tcolorbox}[
    colback=gray!10,
    colframe=gray!10,
    boxrule=0pt,
    arc=4pt,
    left=8pt,
    right=8pt,
    top=6pt,
    bottom=6pt
]
\begin{center}
\small
\textit{Which components in the RL design space, the policy gradient objectives, likelihood estimators, or sampling method, are primary drivers of efficiency and performance?}
\end{center}
\end{tcolorbox}

To answer this question, we conduct a systematic study to disentangle the effects of three key design choices in RL-based diffusion fine-tuning: (i) the form of the policy-gradient objectives, (ii) the likelihood estimation recipe, and (iii) the choice of sampling strategy. For (i), we explore the standard GRPO, along with three new, theoretically grounded policy-gradient objectives that are lightweight in design and engineering-trick-free. For (ii), we conduct a principled investigation on various likelihood estimation approaches, including both backward trajectory-based estimators and forward ELBO-based estimators. For (iii), we compare the effects of the SDE-based and ODE-based samplers whenever a fair comparison is feasible to evaluate the impact of the inference scheme on the algorithm efficiency and stability. We design the controlled experiments to identify the core factors in the design space. 

Through carefully designed numerical experiments on Stable Diffusion 3.5 Medium (SD3.5-M) \citep{esser2024scaling}, we observe that \textbf{the quality of likelihood estimators} is central to RL of diffusion models. We note that the employment of \textbf{ELBO-based} likelihood approximation is the primary factor driving both optimization efficiency and performance gains, with a substantially greater impact than the specific choice of policy-gradient objective or sampler. We find that the trajectory-based estimator adopted by FlowGRPO \citep{liu2025flow} consistently causes a slow convergence and high computational cost, whereas ELBO-based estimators outperform it by achieving a better peak performance at a considerably faster rate, due to a decoupling between training and sampling dynamics that enables the use of any black-box solvers. The superior performance of ELBO-based likelihood estimation is observed across different policy-gradient objectives, indicating that algorithmic efficacy is largely determined by the likelihood estimation strategy rather than the loss objective itself. We further show that ODE-based sampling provides additional efficiency and stability benefits, as it requires a small number of function evaluations (as few as 10 steps) and matches the deterministic sampling procedure used at evaluation time.

By carefully studying each component of the design space, we also discovered a new method that efficiently achieves \textbf{state-of-the-art performance across multiple reward and benchmarks}, including GenEval \citep{ghosh2023geneval}, OCR \citep{liu2025flow}, and DrawBench \citep{ledig2017photo}. Moreover, our discovered algorithm is up to $4.6\times$ more efficient than FlowGRPO \citep{liu2025flow} and $2\times$ more efficient than the current SOTA method DiffusionNFT \citep{zheng2025diffusionnft}, which elevates GenEval score to from $0.24$ to $0.95$ in less than $90$ GPU hours. The fact that this success is achieved without introducing complex designs underscores the importance and the unlimited potential of likelihood estimation in advancing RL algorithms for diffusion and flow models.

\section{Background}
\noindent \textbf{Notation and Setup } Let $\mathcal{X}$ be a general state space (e.g., $\mathbb{R}^d$). Let the policy $\pi_{\theta}: \mathcal{X} \to \mathbb{R}$ be the output distribution of a likelihood-based generative model associated with parameters $\theta$, and $\pi_{\text{ref}}$ stands for a pretrained base model of the same type as $\pi_{\theta}$. The generation using these models is often conditioned on a fixed prompt $\bm{c}$, and hereafter, we assume that $\bm{c}$ is always included in the model input and omit it for simplicity. We use $\bm{x}^{1:G}$ to denote a group of $G$ responses generated given the same prompt $\bm{c}$, $\operatorname{sg}$ to denote the stop gradient operation. For simplicity, we also abbreviate the policy ratio as,
\begin{align*}
    \rho_\theta(\bm{x}) = \dfrac{\pitheta(\bm{x})}{\piold(\bm{x})}, \; \operatorname{sg}(\rho_{\theta})(\bm{x}) = \dfrac{\operatorname{sg}(\pitheta)(\bm{x})}{\piold(\bm{x})}
\end{align*}

The task of generative models post-training through RL is to maximize a given reward function $R: \mathcal{X} \to \mathbb{R}$,
\begin{align}
\label{eq:primary_task}
    \max_{\theta} \; \underset{\bm{x} \sim \pi_{\theta}}{\mathbb{E}} \big[ R(\bm{x})\big] - \beta \operatorname{KL}(\pi_{\theta} || \pi_{\text{ref}}),
\end{align}
where $\beta$ controls the strength of the KL regularization to the base (pretrained) model, $\operatorname{KL}(\pi_{\theta} ||\pi_{\text{ref}})=\mathbb{E}_{\pi_\theta}\log\frac{\pi_\theta}{\pi_\text{ref}}$ is the reverse KL between $\pi_{\theta}$ and $\pi_{\text{ref}}$. The optimal solution to problem \cref{eq:primary_task} is $\pi_{*}(\bm{x}) \propto \piref(\bm{x})\exp(R(\bm{x})/\beta)$. The algorithms for solving such a problem have been extensively studied in the RL literature of likelihood-based generative models, particularly RL post-training for LLMs. The existing effective approaches are policy-gradient-based methods, such as {KL-regularized} REINFORCE \citep{sutton1999policy, li2025back} and its variants, including PPO \citep{schulman2017proximal} and GRPO \citep{shao2024deepseekmath}.

\noindent \textbf{Policy Gradient Methods } {KL-regularized} REINFORCE considers,
\begin{align}
\label{eq:reinforce}
\mathcal{L}_{\text{reinfo}}(\theta) & = \underset{\bm{x}^{i} \sim \piold}{\mathbb{E}}\Big[\rho_{\theta}(\bm{x}^i)A^i - \beta \operatorname{kl}(\bm{x}^i)\Big];
\end{align}
GRPO considers a more complicated objective,
\begin{align}
\label{eq:grpo}
& \mathcal{L}_{\text{grpo}}(\theta) = \underset{\bm{x}^{i} \sim \piold}{\mathbb{E}}\Big[\min\Big(\rhotheta(\bm{x}^i)A^i, \notag \\
&\quad \operatorname{clip}\big(\rhotheta(\bm{x}^i), 1-\varepsilon, 1 + \varepsilon\big) A^i\Big) 
-\beta\operatorname{kl}(\bm{x}^i)\Big],\tag{GRPO}
\end{align}
where $\varepsilon$ controls the strength of the clipping operations, and $\operatorname{kl}(\bm{x}^i)$ is a per-sample KL estimator. For the purpose of lightweight and efficient training, the advantages are typically estimated from rewards within the output group:
\begin{align*}
A^i = \dfrac{R(\bm{x}^i) - \operatorname{mean}(R(\bm{x}^1), \dots, R(\bm{x}^G))}{\operatorname{std}(R(\bm{x}^1), \dots, R(\bm{x}^G))}
\end{align*}
\subsection{Diffusion and Flow Models}
Diffusion and flow models \citep{song2020score, lipman2023flow} model continuous data distribution on $\mathbb{R}^{d}$ by gradually corrupting clean data $\bm{x}_0 \sim \pi_{0} = \pdata$ with additive Gaussian noise according to a forward process, while generation is achieved through reversing this process. The forward-noising process with noise schedule $\alpha_t, \sigma_t$ is
\begin{align}
\label{eq:fm-forward}
    \mathrm{d}\bm{x}_t = \frac{\dot{\alpha}_t}{\alpha_t} \bm{x}_t \mathrm{d}t + \sqrt{2 \dot{\sigma_t} \sigma_t - 2 \frac{\dot{\alpha_t}}{\alpha_t} \sigma_t^2} \mathrm{d} \bm{w}_t, \tag{$\overrightarrow{\mathbb{P}}$}
\end{align}
where $\dot{\alpha}_t$ and $\dot{\sigma_t}$ are time derivatives, $\bm{w}_t$ is standard Brownian motion, and \cref{eq:fm-forward} admits a closed-form expression:
\begin{align*}
    \bm{x}_t \sim p_{t|0}(\bm{x}_t|\bm{x}_0) \iff \bm{x}_t = \alpha_t \bm{x}_0 + \sigma_t \bm{\epsilon}, \bm{\epsilon} \sim \mathcal{N}(\bm{0}, \bm{I}).
\end{align*}

Diffusion and flow models can be represented through the velocity parameterization $\bm{v}_{\theta}$ and trained through minimizing the evidence lower bound (ELBO) of data $\bm{x}_0$,
\begin{align*}
\operatorname{ELBO}(\vtheta, \bm{x}_0) = \underset{\bm{x}_t \sim p_{t|0}(\cdot|\bm{x}_0)}{\mathbb{E}} [w(t) \|\vtheta(\bm{x}_t, t) - \bm{v}\|_2^2],
\end{align*}
where $\bm{v} = \dot{\alpha}_t \bm{x}_0 + \dot{\sigma}_t \bm{\epsilon}$ is the tangent of the conditional trajectory, $w(t)$ is a weighting function. Flow models typically refer to a special case of the above setting with $\alpha_t = 1-t$, $\sigma_t = t$, and thus $\bm{v} = \bm{\epsilon} - \bm{x}_0$. We stick to this setup in the sequel for simplicity. With the learned velocity $\vtheta$, the sampling follows the reverse dynamics,
\begin{align}
\label{eq:fm-backward}
\mathrm{d}\bm{x}_t & = \big[ \vtheta(\bm{x}_t, t) + \dfrac{g_t^2}{2t} \big(\bm{x}_t + (1 - t)\vtheta(\bm{x})\big)\big] \mathrm{d}t + g_t \mathrm{d} \bm{w}_t, \notag\\
 & \qquad g_t = a\sqrt{\dfrac{2t}{1-t}}, \quad a \in [0, 1]. \tag{$\overleftarrow{\mathbb{P}}$}
\end{align}
Here, $a$ controls the level of stochasticity of the trajectories.

\begin{figure}[t]
    \centering
    \includegraphics[width=.85\linewidth]{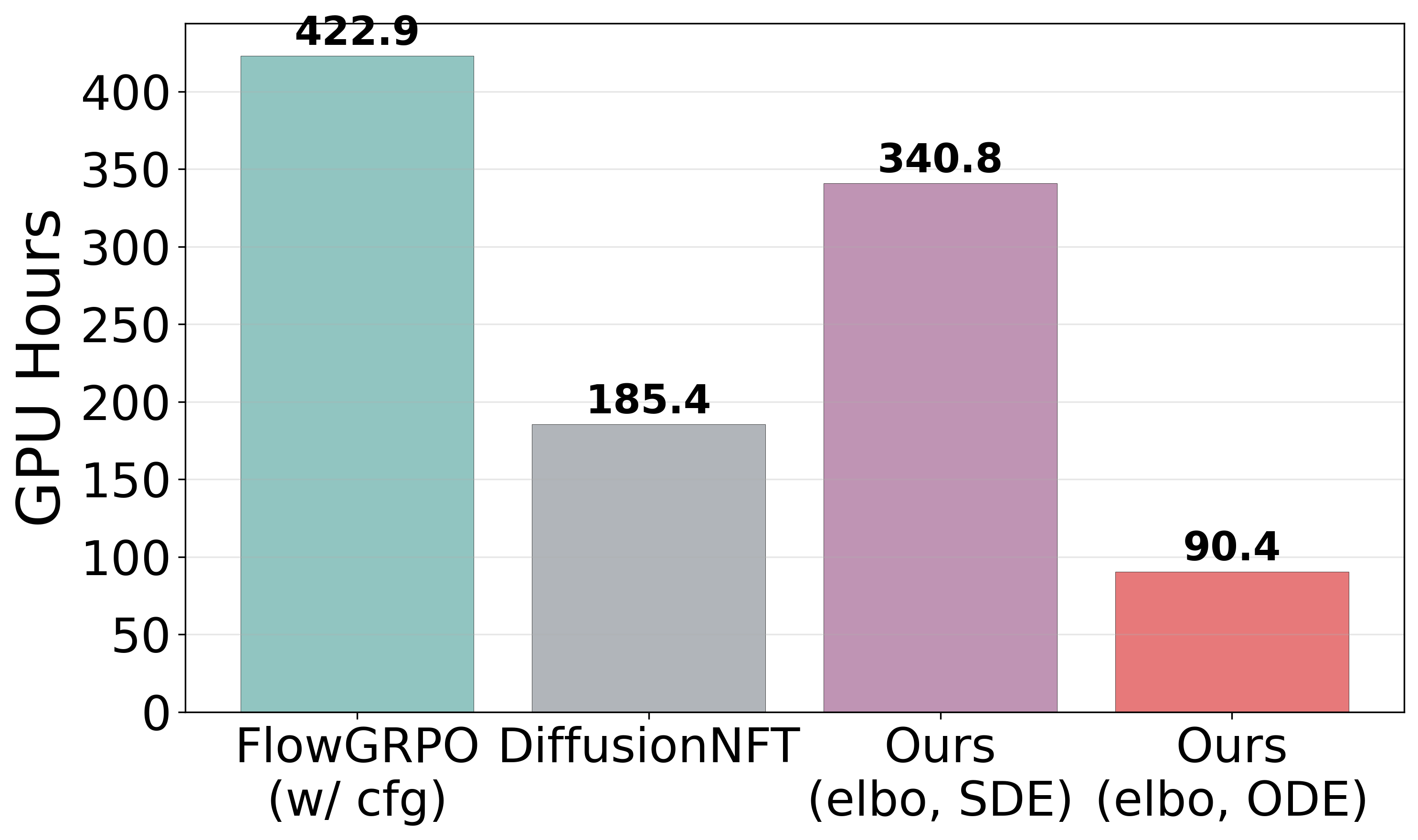}
    \caption{\textbf{Training time comparison on GenEval.} We report the total GPU hours (8$\times$H100) required to reach a GenEval score of 0.95 for different fine-tuning methods. ELBO-based likelihood estimation substantially reduces training cost compared to trajectory-based approaches, and ODE sampling further improves efficiency while achieving the same target performance.}
    \label{fig:train-time}
\end{figure}

\section{Rethinking RL for Diffusion Model with Policy-Gradient Methods}
Existing RL approaches for diffusion models, such as FlowGRPO and its variants \citep{liu2025flow, he2025tempflow, wang2025grpo}, predominantly aim to directly adapt policy-gradient methods from the LLM-RL literature, such as GRPO, to score-based generative models. However, GRPO-type objectives are heavily engineered, with many complex tricks that may be tailored only to LLMs. Because diffusion and flow models are, by design, not likelihood-based generative models \citep{song2020score, benton2024denoising}, we need a thorough re-evaluation of the policy gradient objectives used in this task, dissecting each component to distinguish between non-essential tricks and designs and core contributors to algorithmic success.

\subsection{Revisiting the Vanilla Policy Gradient Method}
\label{sec:epg}
We recall the vanilla \textbf{exact policy gradient (EPG)} objective for solving the reward alignment task \eqref{eq:primary_task}:
\begin{multline}
\label{eq:epg}\tag{EPG}
\mathcal{L}_{\text{epg}}(\theta) = \underset{\bm{x}_0^{i} \sim \piold}{\mathbb{E}}\Big[\rhothetasg(\bm{x}_0^i) A_{\text{epg}}^i \log \pitheta(\bm{x}_0^i) \\
- \beta\operatorname{kl}(\bm{x}_0^i) \Big], ~ A_{\text{epg}}^i = R^i - \operatorname{mean}(R^1, \dots, R^{G}) .
\end{multline}
EPG is essentially the same as {KL-regularized} REINFORCE with an advantage function computed by subtracting the group mean from each reward value.\footnote{{EPG differs from \citet{peng2019advantage}: EPG is an on-policy method that corrects for distributional shift via the importance ratio $\rhothetasg$, whereas AWR is an off-policy weighted regression approach that does not employ importance ratio correction.}} Compared with the GRPO-based diffusion RL approach, EPG follows three simplifications:

\noindent \textbf{Remove Clipping Operation } \cref{eq:epg} completely discards the use of $\operatorname{clip}(\cdot, 1-\varepsilon, 1 + \varepsilon)$, in which the clipping threshold $\varepsilon$ is an important yet notoriously hard-to-tune hyperparameter, which even necessitates asymmetric thresholds in certain cases \citep{chen2025minimax, khatri2025art}. With the clipping operation present, the clipping threshold needs to stay at a moderate level, without being too large, which leads to potential training instability, or too small, which hinders training efficiency and effectiveness. This complex dilemma, compounded by additional uncertainty in likelihood estimation from diffusion models, leaves it unclear whether it's necessary to use the clipping operation. To keep it simple, we avoid using $\operatorname{clip}$ in the objective formulation and find that it has a negligible impact on final performance. 

\noindent \textbf{Remove Advantage Bias } \cref{eq:epg} drops the advantage normalization by dividing the standard deviation computed within the response group. Such normalization is known to cause prompt-level difficulty bias in LLM-based RL for $0/1$-verifiable rewards, where tasks that are too easy or too difficult are often associated with lower reward standard deviations, thereby biasing RL updates toward these cases \citep{liu2025understanding}. This issue is exacerbated in diffusion model RL, where reward values are often distributed continuously, with even fewer discrepancies. To ensure unbiased training and a better numerical stability, we avoid dividing centered reward values by their standard deviation. 

\noindent \textbf{Remove Guided Generation } \cref{eq:epg} adopts the naive conditional sampling from the diffusion/flow models without using classifier-free guidance (CFG) \citep{ho2022classifier}, which is adopted as a default option by most of the diffusion model RL literature. While CFG substantially improves output quality, it also doubles the sampling cost, making training more computationally intensive. More importantly, CFG potentially causes a training-inference distribution mismatch, as the guided output distribution is known to be sharper and more interior-concentrated than the non-guided one, which we aim to estimate and incorporate into the objective. Such a mismatch has been shown to cause issues for policy-gradient objectives in LLM RL training \citep{zheng2025group, liu-li-2025-rl-collapse}. To take preventive measures, we eliminate the use of CFG in rollout sampling, thereby avoiding the distribution mismatch while achieving an additional training speedup by saving NFEs. 

\begin{figure*}[t]
    \centering
    \includegraphics[width=1\linewidth]{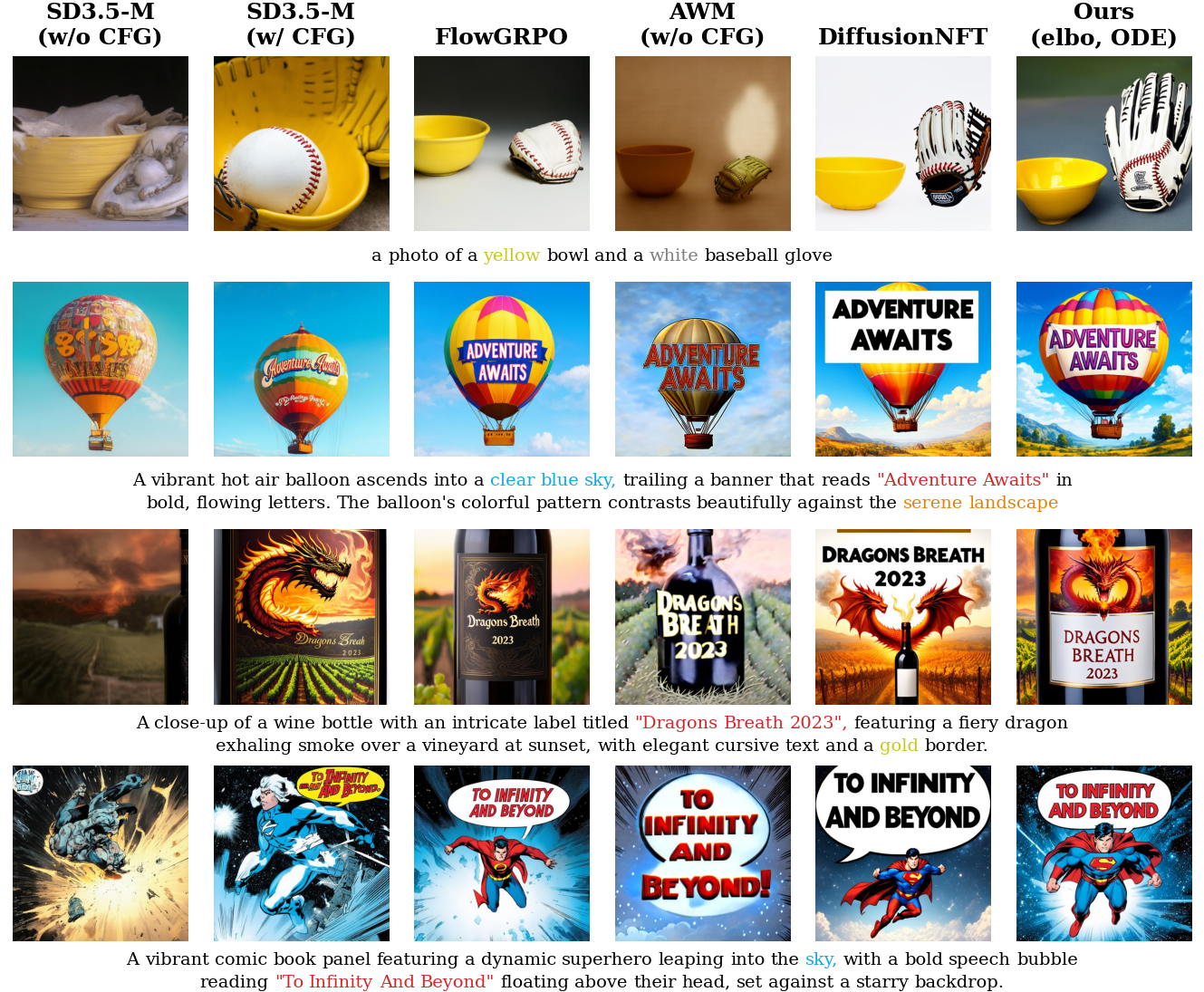}
    \caption{Qualitative comparison between benchmarks and our model. See App. \ref{app:results} for additional figures.
    }
    \label{fig:all_comp}
\end{figure*}

\subsection{Keystone in Diffusion RL: Likelihood Estimation}
\noindent \textbf{Data Likelihood Estimation } Unlike LLM, diffusion and flow models do not have direct access to data likelihood, which is essential to apply policy-gradient-based RL techniques for post-training. The recipe for estimating data likelihood has three main ingredients: the estimation formula, the choice of sampler, and the ELBO weighting. We will elaborate on each of these aspects in this section. 

\noindent \textbf{Estimation Formula } There are two major ways of obtaining the likelihood of diffusion-generated data, using either the forward process \cref{eq:fm-forward} or the backward process \cref{eq:fm-backward}.  

Using the \textbf{forward process} \cref{eq:fm-forward}, the likelihood $\pi_{\theta}(\bm{x}_0)$ can be accurately approximated up to a training-unimportant constant $\bm{C}_{\text{fw}}(\bm{x}_0)$, using the flow matching loss {\citep{song2021score, kingma2021variational, kingma2023understanding}},\footnote{
{The standard flow matching loss corresponds to a special case of \cref{eq:prob-elbo} with uniform weighting $w(t) = 1$ (cf.\ \cref{eq:elbo_simple}).}}
\begin{align}
\label{eq:prob-elbo}
\log \pi_{\theta}(\bm{x}_0)&  = - \mathbb{E}_{t, \bm{\epsilon}} \big[w(t) \| \vtheta(\bm{x}_t , t) - \bm{v}\|^2_{2}\big]. \tag{ELBO}
\end{align}
Using the \textbf{backward process } \cref{eq:fm-backward}, note that the transition probability becomes a tractable Gaussian distribution,
\vspace{-3pt}
\begin{align*}
& p_{\theta}(\bm{x}_{t_{i-1}} | \bm{x}_{t_{i}})  = \mathcal{N}\Big(\bm{x}_{t_i} + \Big[\vtheta(\bm{x}_{t_i}, t_i) + \frac{g_{t_i}^2}{2t_i}\big(\bm{x}_{t_i} + \\
& \qquad (1 - t_{i}) \vtheta(\bm{x}_{t_i}, t_i)\big)\Big] \big(t_i - t_{i-1}\big), g_{t_i}^2(t_i - t_{i-1})\bm{I}\Big).
\end{align*}
We can derive a different estimator for $\pi_{\theta}(\bm{x}_0)$ up to a training-unimportant constant $\bm{C}_{\text{bw}}(\bm{x}_0)$ {\citep{ho2020denoising}},
\begin{align}
\label{eq:prob-traj}
\log \pitheta(\bm{x}_0)& \approx \sum_{i=1}^{N} \log \pi_{\theta}(\bm{x}_{t_{i-1}}|\bm{x}_{t_i})  \tag{Trajectory}
\end{align}
where $0 = t_0 < \dots < t_{N} = 1$, $g_t = \sqrt{2t/(1-t)}$ to ensure $\bm{C}_{\text{bw}}(\bm{x}_0)$ is a $\theta$-independent constant.

\noindent \textbf{Sampler Choice } The availability of the two estimation formulas is dependent on the choice of the sampler. The mainstream inference methods are SDE-based sampling \citep{ho2020denoising}, which corresponds to using $a = 1$ in \cref{eq:fm-backward}, and ODE-based sampling \citep{song2021denoising}, which corresponds to using $a = 0$. For fast and NFE-efficient sampling, the ODE sampler is more preferable due to the many successes of inference acceleration algorithms \citep{lu2022dpm, zhang2022fast,zhang2023gddim}.

The stochasticity of trajectories has an important influence on the validity of estimation formulas. Trajectory-based formula \cref{eq:prob-traj} can only be computed with the SDE sampler, as otherwise the estimation formula would be problematic due to the degeneracy of the conditional Gaussian transition $p_{\theta}(\bm{x}_{t_{i-1}}|\bm{x}_{t_i})$. This constrains trajectory-based likelihood estimation to a high computational cost.

On the other hand, ELBO-based likelihood estimation \cref{eq:prob-elbo} is free of such constraints, and can be deployed with any black-box sampler. Moreover, ELBO computation does not require inference trajectories; therefore, we can cache only the final generated samples and discard the intermediate diffusion states, saving additional memory. This difference makes the ELBO-based approach more flexible and efficient than the trajectory one.

\noindent \textbf{ELBO Weighting } While the trajectory-based estimation has a fixed weighting implicitly determined by the noise schedule $g_t$, ELBO-based estimation enjoys an additional flexibility, namely the weighting $w(t)$. The ELBO estimation variants differ mostly in how they choose $w(t)$ and the regression objective (i.e., the $\bm{\epsilon}, \bm{x}, \bm{v}$-loss) \citep{li2025back}. For the simplicity of demonstration, we unified each objective considered in this work in the same form of $\bm{v}$-loss $\mathbb{E} \|\vtheta - \bm{v}\|_2^2$ with different weighting. We consider several choices in the literature, including \textbf{Path-KL weighting}, which amounts to picking $w(t) = \frac{1-t}{t}$ \citep{song2020score}, and \textbf{Simple weighting}, which is equivalent to choosing $w(t) = 1$ \citep{ho2020denoising, shi2025demystifying}. Additionally, we investigate a self-normalized data-dependent \textbf{Adaptive weighting} \citep{yin2024one, zheng2025diffusionnft}, which computes the ELBO as $\mathbb{E}_{t, \bm{\epsilon}} \Big[t \cdot d \cdot \| \vtheta - \bm{v} \|_2^2 / \operatorname{sg}(\| \vtheta - \bm{v} \|_1)\Big]$. For a detailed discussion, see \cref{sec:elbo_app}.

\begin{table*}[h]
    \centering
    \caption{\textbf{Effect of likelihood estimation and sampling strategy across policy-gradient objectives on \texttt{GenEval}.} \colorbox{gray!20}{Gray-colored} cells indicate in-domain reward. Across objectives, we compare trajectory-based (w/ SDE sampling) and ELBO-based (w/ SDE or ODE) likelihood estimation. Performance differences across policy-gradient objectives are minor, indicating limited sensitivity to the specific loss formulation. Under ELBO estimation, ODE sampling achieves performance comparable to SDE sampling with reduced training cost. 
    }
    \label{tab:main_comp}
    \setlength{\tabcolsep}{4pt}
    \renewcommand{\arraystretch}{1}
    \begin{tabular}{ccc c  cccccc}
        \toprule
        Loss & Likelihood Est. & Sampler & GenEval & PickScore & ClipScore & HPSv2.1 & Aesthetic & ImgRwd  \\
        \midrule
        \multirow{4}{*}{\cref{eq:epg}} & Traj. & SDE & \cellgray 0.92 
        & 21.39
        & 0.301
        & 0.240
        & 4.55 
        & 0.83
        \\
        & Traj. & SDE w/ cfg & \cellgray 0.95 
        & 22.48 
        & 0.308
        & 0.263 
        & 5.12 
        & 1.15  \\
        & ELBO & SDE & \cellgray 0.90  & 22.00 & 0.297 & 0.252 & 5.03 & 0.75 \\
        & ELBO & ODE & \cellgray 0.96 & 22.77 & 0.304 & 0.281 & 5.33 & 1.19  \\
        \midrule
        \multirow{2}{*}{\cref{eq:pepg}} 
        & ELBO & SDE  & \cellgray 0.96  & 23.25 & 0.305 & 0.302 & 5.47 & 1.35  \\
        & ELBO & ODE & \cellgray 0.96 & 22.85 & 0.305 & 0.289 & 5.33 & 1.26 \\
        \midrule
        \multirow{2}{*}{\cref{eq:par}} 
        & ELBO & SDE & \cellgray 0.94  & 22.79 & 0.300 &  0.281  & 5.26 & 1.16 \\
        & ELBO & ODE & \cellgray 0.96& 22.97 & 0.302 & 0.300 & 5.42 & 1.35 \\
        \midrule
        \cref{eq:grpo} & ELBO & ODE & \cellgray 0.94 & 22.45 & 0.306 & 0.272 & 5.10 & 1.03 \\
        \bottomrule
    \end{tabular}
\end{table*}
\begin{table*}[h]
\centering
\caption{\textbf{Evaluation Results across tasks and reward settings.}
{\setlength{\fboxsep}{1pt}\colorbox{gray!20}{Gray-colored}} cells indicate in-domain reward performance for each task.
$\dagger$ indicates evaluated results on official checkpoints.
$\ddagger$ indicates evaluated under $1024 \times 1024$ resolution. \textbf{Bold} indicates the best result within each task block. For models in Baselines category, we evaluate GenEval and OCR on its corresponding dataset, while other rewards are evaluated on DrawBench \citep{saharia2022photorealistic}.}
\label{tab:eval_results}
\setlength{\tabcolsep}{5pt}
\renewcommand{\arraystretch}{1}
\begin{tabular}{c c c cc cc c c c}
\toprule
\multirow{1}{*}{Eval. Dataset} & \multirow{1}{*}{Model} 
& GenEval & OCR & PickScore & ClipScore
& HPSv2.1 & Aesthetic & ImgRwd \\
\midrule
\multirow{5}{*}{Baselines} & SD-XL$^\ddagger$     
& 0.55 
& 0.14 
& 22.42 
& 0.287 
& 0.280 
& 5.60 
& 0.76 
\\
& SD3.5-L$^\ddagger$   
& 0.71 
& 0.68 
& 22.91 
& 0.289 
& 0.288 
& 5.50 
& 0.96 
\\
& FLUX.1-Dev            
& 0.66 
& 0.59 
& 22.84 
& 0.295 
& 0.274 
& 5.71 
& 0.96 
\\
& SD3.5-M     
& 0.24 
& 0.12 
& 20.51 
& 0.237 
& 0.204 
& 5.13 
& -0.58
\\
& \quad + CFG            
& 0.63 
& 0.59 
& 22.34 
& 0.285 
& 0.279 
& 5.36 
& 0.85 
\\
\midrule
\multirow{4}{*}{GenEval} & FlowGRPO$^\dagger$ 
& \cellgray 0.95 
& - 
& 22.51 
& 0.293 
& 0.274 
& 5.32 
& 1.06 
\\
& AWM 
& \cellgray 0.89 
& - 
& 22.00
& 0.302
& 0.242
& 4.94
& 0.84
\\
& DiffusionNFT 
& \cellgray  0.95 & - & \textbf{22.88} & 0.303 & \textbf{0.289} & 5.25 & 1.21
\\
& Ours 
& \cellgray \textbf{0.96} & - & 22.85 & \textbf{0.305} & \textbf{0.289} & \textbf{5.33} & \textbf{1.26}
\\
\midrule
\multirow{4}{*}{OCR} & FlowGRPO$^\dagger$ 
& - 
& \cellgray 0.92 
& 22.41 
& 0.290 
& 0.280 
& 5.32 
& 0.95 
\\
& AWM 
& -
& \cellgray 0.80
& 20.70
& 0.301 
& 0.206
& 4.53 
& -0.13
\\
& DiffusionNFT 
& - & \cellgray 0.93 & \cellgray 22.09 & \cellgray 0.307 & \cellgray 0.277 & 5.17 & 0.97
\\
& Ours 
& - & \cellgray \textbf{0.94} & \cellgray \textbf{22.93} & \cellgray \textbf{0.315} & \cellgray \textbf{0.302} & \textbf{5.33} & \textbf{1.34}
\\
\midrule
\multirow{3}{*}{Drawbench} & FlowGRPO$^\dagger$ 
& - 
& - 
& \cellgray 23.50 
& 0.280 
& 0.316 
& 5.90 
& 1.29 
\\
& DiffusionNFT 
& - & -
& \cellgray 23.61
& \cellgray 0.288
& \cellgray \textbf{0.344}
& 6.04
& \textbf{1.46}
\\
& Ours 
& - & - 
& \cellgray  \textbf{23.68}
& \cellgray  \textbf{0.296}
& \cellgray  0.325
& \textbf{6.06}
& 1.45
\\
\bottomrule
\end{tabular}
\end{table*}

\subsection{Alternatives of Policy-Gradient Objectives}
With likelihood estimation as the primary focus in diffusion model RL, the choice of policy-gradient objectives can be relatively flexible once an effective likelihood estimation method is adopted. To illustrate this point, we can consider a few objectives that differ from the EPG objective introduced in \cref{sec:epg}. When combined with a high-quality likelihood estimator, these objectives exhibit near-identical peak performance without using CFG during training.

\noindent \textbf{GRPO } The first objective candidate to consider is GRPO, which is described in \cref{eq:grpo}. We retain all the original design elements, including the clipping operation and the original advantage normalization method. We adapt this objective to the diffusion and flow model by estimating $\pitheta$ using the adaptive weighted ELBO \cref{eq:elbo_adaptive} with a \textit{single} Monte Carlo sample, a recipe that consistently performs well across tasks. This computes $\pitheta(\bm{x})/\piold(\bm{x})$ as 
\begin{align*}
  \exp\bigg( \frac{t  d \cdot \big\| \vold(\bm{x}_t, t) - \bm{v} \big\|_2^2}{\operatorname{sg}(\big\| \vold(\bm{x}_t ,t) - \bm{v} \big\|_1)} -\frac{ td \cdot \big\| \vtheta(\bm{x}_t, t) - \bm{v} \big\|_2^2}{\operatorname{sg}(\big\| \vtheta(\bm{x}_t, t) - \bm{v} \big\|_1)}\bigg).
\end{align*}
We also include a discussion of how other GRPO-style algorithms can be connected to our approach in \cref{sec:grpo_app}.

\noindent \textbf{Proximal Exact Policy Gradient (PEPG) } {To ensure stable policy optimization, one may constrain policy updates to remain close to the current policy, as in trust-region methods such as TRPO~\citep{schulman2015trust}. PPO~\citep{schulman2017proximal} relaxes these constraints using heuristic tricks such as clipping. Since we have completely abandoned the use of $\operatorname{clip}$, we propose to achieve a similar stabilization goal through a theoretically-grounded lens of proximal gradient descent over the space of probability measures. Concretely, we consider the following optimization problem:}
\begin{multline*}
\max_{\pitheta} \; \underset{\bm{x} \sim \piold}{\mathbb{E}} \Big[ \frac{\pitheta(\bm{x})}{\piold(\bm{x})} R(\bm{x}) \Big] - \beta \, \KL(\pitheta \| \piref) \\ - \frac{1}{\eta} \, \KL(\pitheta \| \piold),
\end{multline*}
{where the first KL term regularizes toward the reference model and the second KL term acts as a proximal penalty that limits the step size relative to the old policy. Solving this problem yields the novel PEPG objective,}
\begin{align}
\mathcal{L}_{\text{pepg}}(\theta) & = \!\!\underset{\bm{x}_0^{i} \sim \piold}{\mathbb{E}}\!\Big[\Big(A_{\text{pepg}}^i - \log \rhothetasg(\bm{x}_0^i) \Big) \rhotheta(\bm{x}_0^i) \notag \\
& \quad -  \eta\operatorname{kl}(\bm{x}_0^i) \Big], \qquad A^i_{\text{pepg}} = \frac{\eta}{\beta} A_{\text{epg}}^i,\label{eq:pepg}\tag{PEPG}
\end{align}
where $\eta$ is another hyperparameter different from $\beta$ that controls the step size for such proximal gradient descent over probability distribution space.

\noindent \textbf{Proximal Advantage Regression (PAR)} Alternatively, we can achieve the same goal targeted by PEPG through an $L^2$ regression-type loss between the log probability ratio and the advantage value:
\begin{align}
\mathcal{L}_{\text{par}}(\theta) & = \underset{\bm{x}_0^i \sim \piold}{\mathbb{E}}\Big[{-\frac{1}{2}} \rhothetasg(\bm{x}_0^i) \Big\| A_{\text{par}}^i  - \log \rhotheta(\bm{x}_0^i) \Big\|_2^2 \notag \\
& \qquad - \eta\operatorname{kl}(\bm{x}_0^i) \Big], \qquad A_{\text{par}}^i = \dfrac{\eta}{\beta}A_{\text{epg}}^i.\label{eq:par}\tag{PAR}
\end{align}

While our proposed EPG, PEPG, and PAR share distinct loss formulations, they are all mathematically valid policy gradient objectives that provably achieve the goal of solving post-training task \cref{eq:primary_task}, as is stated in \cref{thm:pg_equivalence}. We provide proof in \cref{app:proof}.

\begin{theorem}[Mathematical Validity of PG Objectives]
\label{thm:pg_equivalence}
\cref{eq:epg}, \cref{eq:pepg}, and \cref{eq:par} share the optimal minimizer $\pi_{*}(\bm{x}) \propto \piref(\bm{x})\exp(R(\bm{x})/\beta)$.
\end{theorem}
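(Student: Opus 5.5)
The plan is to work in the space of distributions and identify, for each objective, the gradient it produces and the stationarity condition that gradient implies. The per-sample term $\operatorname{kl}(\bm{x}_0^i)$ is taken to be an estimator whose expected gradient equals $\nabla_\theta \KL(\pitheta\|\piref)$. The goal is then to show that $\pi_*\propto\piref e^{R/\beta}$ is the fixed point at which every update stops. Throughout I will use two facts: the score identity $\mathbb{E}_{\pitheta}[\nabla_\theta\log\pitheta]=0$, and $\nabla_\theta\rho_\theta=\rho_\theta\nabla_\theta\log\pitheta$. The first makes the group-mean baseline in $A^i_{\text{epg}}$ irrelevant in expectation, since it is a constant times the score; this holds with the leave-one-in mean up to an $O(1/G)$ rescaling. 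It also lets me freely add or subtract constants inside any advantage.

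\textbf{EPG.} Because $\rhothetasg$ is a stop-gradient, importance reweighting turns the expectation under $\piold$ into one under $\pitheta$. The gradient of \cref{eq:epg} is therefore
\[
\mathbb{E}_{\pitheta}\big[R\,\nabla_\theta\log\pitheta\big]-\beta\nabla_\theta\KL(\pitheta\|\piref),
\]
which is exactly $\nabla_\theta$ of \cref{eq:primary_task}. So the critical points of EPG are those of the primary objective. Over the probability simplex, its first-order condition is
\[
R-\beta\log(\pi/\piref)=\text{const}.
\]
This gives $\pi=\pi_*$, and strict concavity of \cref{eq:primary_task} in $\pi$ makes this the unique maximizer.

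\textbf{PEPG.} I would first solve the proximal problem stated before \cref{eq:pepg} in closed form via Gibbs' variational principle:
\[
\pi_{\text{new}}\propto\big(\piref^{\beta}\,\piold^{1/\eta}\,e^{R}\big)^{1/(\beta+1/\eta)}.
\]
Its unique fixed point $\pi_{\text{new}}=\piold$ forces $\piref^{\beta}e^{R}\propto\piold^{\beta}$, i.e.\ $\piold=\pi_*$. Next I would compute the gradient of \cref{eq:pepg} using $\nabla\rho_\theta=\rho_\theta\nabla\log\pitheta$. It equals $\mathbb{E}_{\pitheta}[(A_{\text{pepg}}-\log\rho_\theta)\nabla\log\pitheta]-\eta\nabla\KL(\pitheta\|\piref)$. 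After multiplying through by a positive constant, this matches the gradient of the proximal objective. The $\eta/\beta$ scaling in $A_{\text{pepg}}$ is what makes the coefficients align.

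\textbf{PAR.} Here I would observe that
\[
\nabla_\theta\Big[-\tfrac12\rhothetasg\,\|A_{\text{par}}-\log\rho_\theta\|^2\Big]=\rhothetasg\,(A_{\text{par}}-\log\rho_\theta)\,\nabla_\theta\log\pitheta .
\]
This is identical to the first-term gradient of PEPG, since $\rhothetasg\nabla\log\pitheta=\nabla\rho_\theta$ in value. Hence PAR and PEPG have the same gradient field and the same fixed point $\pi_*$.

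The main obstacle is the constant bookkeeping between $\eta$, $\beta$ and the paper's scaled advantages. I need to check that the functional stationarity condition of \cref{eq:pepg},
\[
\tfrac{\eta}{\beta}R-\log(\pi/\piold)-\eta\log(\pi/\piref)=\text{const},
\]
reduces at the fixed point $\pi=\piold$ to $R/\beta\cdot\eta=\eta\log(\pi/\piref)+\text{const}$, which yields $\pi_*$. If this does not reduce to $\pi_*$, the $\eta/\beta$ scaling was chosen wrongly and I would re-derive it. I would state the result as a statement about the fixed points of the induced updates, since PEPG and PAR are iterative schemes in $\piold$ rather than a single static objective.
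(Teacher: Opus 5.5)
Your EPG and PAR arguments coincide with the paper's. EPG's gradient is that of the KL-regularized objective, and the identity $\rhothetasg\,\nabla_\theta\log\pitheta=\nabla_\theta\rhotheta$ (in value) makes PAR's first-order condition identical to PEPG's.

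The problem is the PEPG step. You claim that, after multiplying by a positive constant, the PEPG gradient matches the gradient of the proximal problem with penalty $\frac{1}{\eta}\KL(\pitheta\|\piold)$. That is false unless $\beta=1$. The proximal gradient is $\mathbb{E}_{\pitheta}\big[(R-\frac{1}{\eta}\log\rhotheta)\nabla_\theta\log\pitheta\big]-\beta\nabla_\theta\KL(\pitheta\|\piref)$. Matching the coefficient of $\log\rhotheta$ forces the constant to be $\eta$, which then produces $\eta R$ and $\eta\beta$ where PEPG has $\frac{\eta}{\beta}R$ and $\eta$. PEPG is in fact the proximal step with penalty weight $\beta/\eta$. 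So the Gibbs update you wrote, with exponent $1/(1+\eta\beta)$ on $\piold$, is not the PEPG update. Solve your own stationarity condition $\frac{\eta}{\beta}R-\log(\pi/\piold)-\eta\log(\pi/\piref)=\text{const}$ for $\pi$, rather than only evaluating it at $\pi=\piold$. This gives the paper's update
\[
\pi_{k+1}\propto\big(\piref\, e^{R/\beta}\big)^{\frac{\eta}{1+\eta}}\,\pi_k^{\frac{1}{1+\eta}}.
\]

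Your conclusion survives, because every positive proximal weight has the same fixed point $\pi_*$ and your direct check at $\pi=\piold$ is correct. But a unique-fixed-point statement is weaker than what the paper asserts, namely that the iterates converge to $\pi_*$; you have not shown the iteration gets there. The explicit update gives this in one line: $\log\pi_{k+1}-\log\pi_*=\frac{1}{1+\eta}(\log\pi_k-\log\pi_*)+c_k$, with $c_k$ an $\bm{x}$-independent normalizing constant. Hence $\pi_k\propto\pi_*^{1-(1+\eta)^{1-k}}\,\pi_1^{(1+\eta)^{1-k}}\to\pi_*$.

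Finally, take the baseline to be the population mean $b=\mathbb{E}_{\piold}[R]$, as the paper does. With the in-group mean, the $(1-1/G)$ factor you noted changes the ratio between the reward and KL terms. That moves the stationary point to $\piref\, e^{(1-1/G)R/\beta}$, so it is not a harmless rescaling.
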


\noindent \textbf{General algorithm }
\cref{alg:general_elbo_pg} summarizes a unified training procedure.
It highlights three interchangeable components: (i) the policy-gradient objective (EPG, PEPG, PAR, GRPO, etc), (ii) the likelihood estimator (trajectory-based or ELBO-based), and (iii) the sampler (SDE or ODE).
When ELBO-based estimation is used, the update depends only on final samples, enabling the use of deterministic ODE sampling during training.

\section{Experiments}
\label{sec:exp}
Our experiments are designed to analyze the key design choices in reward-based diffusion fine-tuning and to identify the factors that most strongly influence optimization efficiency and performance. In \cref{subsec:main_results}, we study the impact of \textbf{(i)} policy-gradient loss design, \textbf{(ii)} likelihood estimation strategy, and \textbf{(iii)} sampling scheme. In \cref{subsec:further_results}, we provide additional empirical analysis and ablation studies.

\subsection{Experimental Setup}

All experiments are conducted using \texttt{SD3.5-M}~\citep{esser2024scaling} at a resolution of $512\times512$. Unless otherwise stated, our training configuration follows DiffusionNFT~\citep{zheng2025diffusionnft} to ensure a fair comparison. We fine-tune the pretrained diffusion model using LoRA \citep{hu2022lora}. We refer readers to \cref{app:impl} for detailed experimental settings.

\textbf{Reward Models}
We consider both rule-based and model-based rewards. Rule-based rewards include \texttt{GenEval} \citep{ghosh2023geneval} for compositional image generation and \texttt{OCR} for visual text rendering, following the partial reward assignment strategies used in FlowGRPO. Model-based rewards include \texttt{PickScore}~\citep{kirstain2023pick}, \texttt{CLIPScore}~\citep{hessel2021clipscore}, \texttt{HPSv2.1}~\citep{wu2023human}, \texttt{Aesthetics}~\citep{schuhmann2022laionaesthetics}, \texttt{ImageReward}~\citep{xu2023imagereward}, which capture image quality, image--text alignment, and human preference.

\textbf{Prompt Datasets}
For \texttt{GenEval} and \texttt{OCR}, we use the corresponding training and test splits provided by FlowGRPO. For other reward models, training is performed on \texttt{Pick-a-Pic}~\citep{kirstain2023pick}, and evaluation is conducted on \texttt{DrawBench}~\citep{saharia2022photorealistic}.

\subsection{Main Results}
\label{subsec:main_results}

\noindent \textbf{Policy-gradient loss design has a limited impact }
As shown in \cref{tab:main_comp}, we observe comparable \texttt{GenEval} performance across different policy-gradient objectives, including \cref{eq:grpo}, \cref{eq:epg}, \cref{eq:pepg}, and \cref{eq:par}. This suggests that, once likelihood estimation and sampling strategy are fixed, the specific choice of policy-gradient loss functional has a relatively minor effect on final performance.

Although performance across different policy-gradient losses is largely comparable, \cref{eq:pepg,eq:par} achieve slightly higher average performance than others, and they both correspond to an \emph{exact} proximal policy-gradient formulation developed in this work. For other benchmarks, unless otherwise specified, \textit{we select PEPG as the primary objective for evaluation}.

\noindent \textbf{ELBO-based likelihood estimation accelerates convergence }
\cref{fig:efficiency} compares convergence behavior across different likelihood estimation strategies in terms of prompt efficiency. ELBO-based likelihood estimation reaches a \texttt{GenEval} score of $0.95$ after observing substantially fewer training prompts than trajectory-based methods. In particular, ELBO with ODE sampling achieves approximately $4.68\times$ faster convergence, while ELBO with SDE sampling achieves $1.24\times$ faster convergence relative to FlowGRPO. These results indicate that ELBO-based likelihood estimation provides a significantly more efficient optimization signal, enabling rapid performance gains with reduced training data and computation.

\noindent \textbf{ODE sampling further boosts efficiency under ELBO }
Given ELBO-based likelihood estimation, the choice of sampler primarily affects computational efficiency over performance. As shown in \cref{fig:efficiency}, ODE- and SDE-based samplers observe a similar number of training prompts to reach comparable \texttt{GenEval} performance. However, SDE sampling incurs substantially higher computational cost, as it requires approximately 4 times more function evaluations to generate stable, image-like samples (e.g., 40 steps v.s. 10 steps for ODE sampling). As a result, ODE sampling yields significantly faster training while maintaining comparable performance under ELBO-based optimization.

\begin{figure}[t]
    \vspace{-5pt}
    \centering
    \includegraphics[width=.85\linewidth]{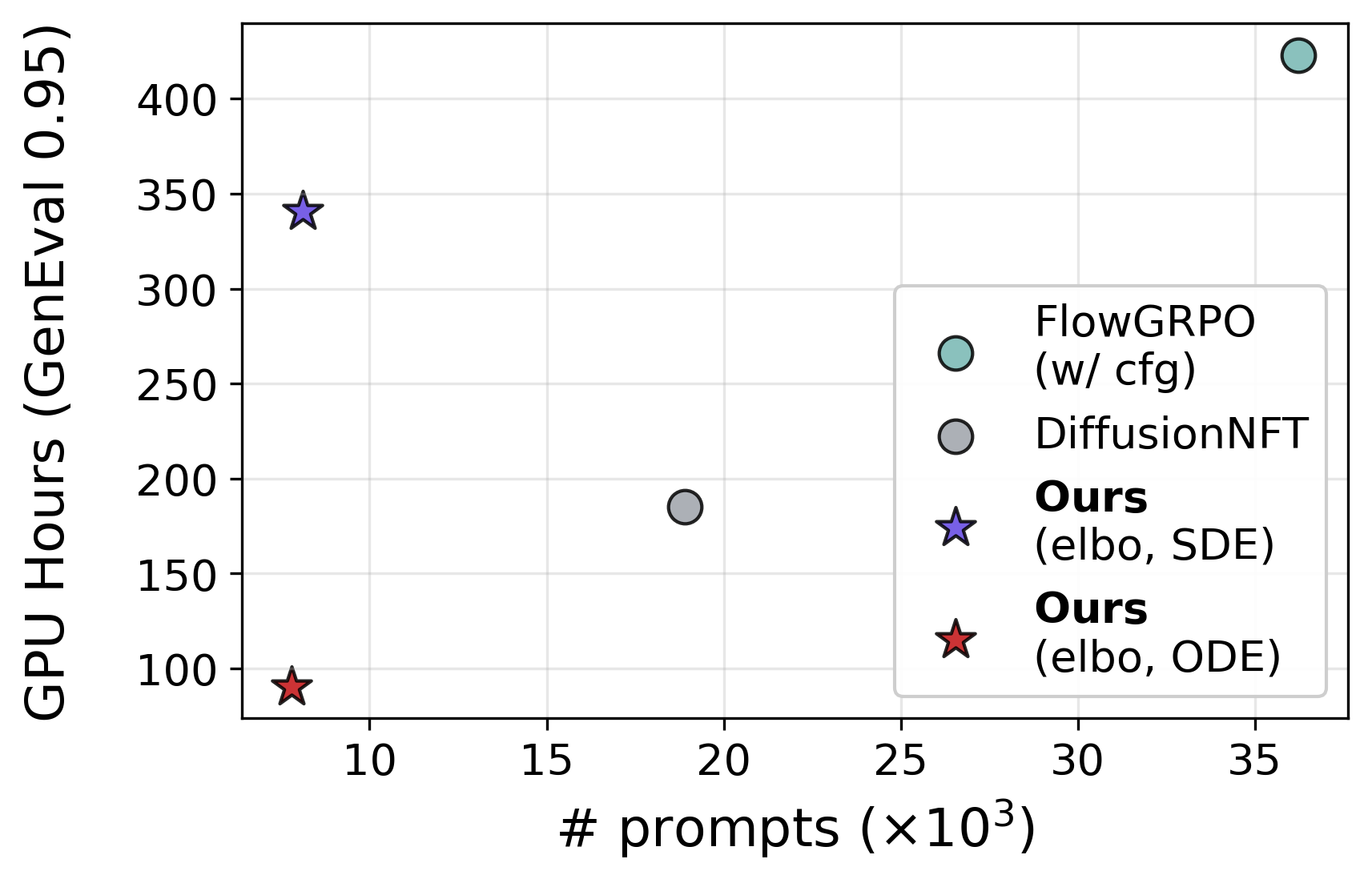}
    \caption{The \textbf{\# of prompts} seen in training vs. \textbf{GPU hours} to reach \texttt{GenEval} score of 0.95.}
    \label{fig:efficiency}
    \vspace{-15pt}
\end{figure}

\noindent \textbf{Comparison across benchmarks }
In \cref{tab:eval_results}, we compare our method with PEPG, ELBO-based likelihood estimation, and ODE sampling against various recent benchmarks, including FlowGRPO~\citep{liu2025flow}, AWM~\citep{xue2025advantage}, and DiffusionNFT~\citep{zheng2025diffusionnft}, across multiple reward functions. Our approach outperforms existing methods in most cases while benefiting from improved training efficiency and a unified training procedure, suggesting a high efficacy of our algorithm recipe.

\begin{figure}[t]
    \centering
    \includegraphics[width=0.82\linewidth]{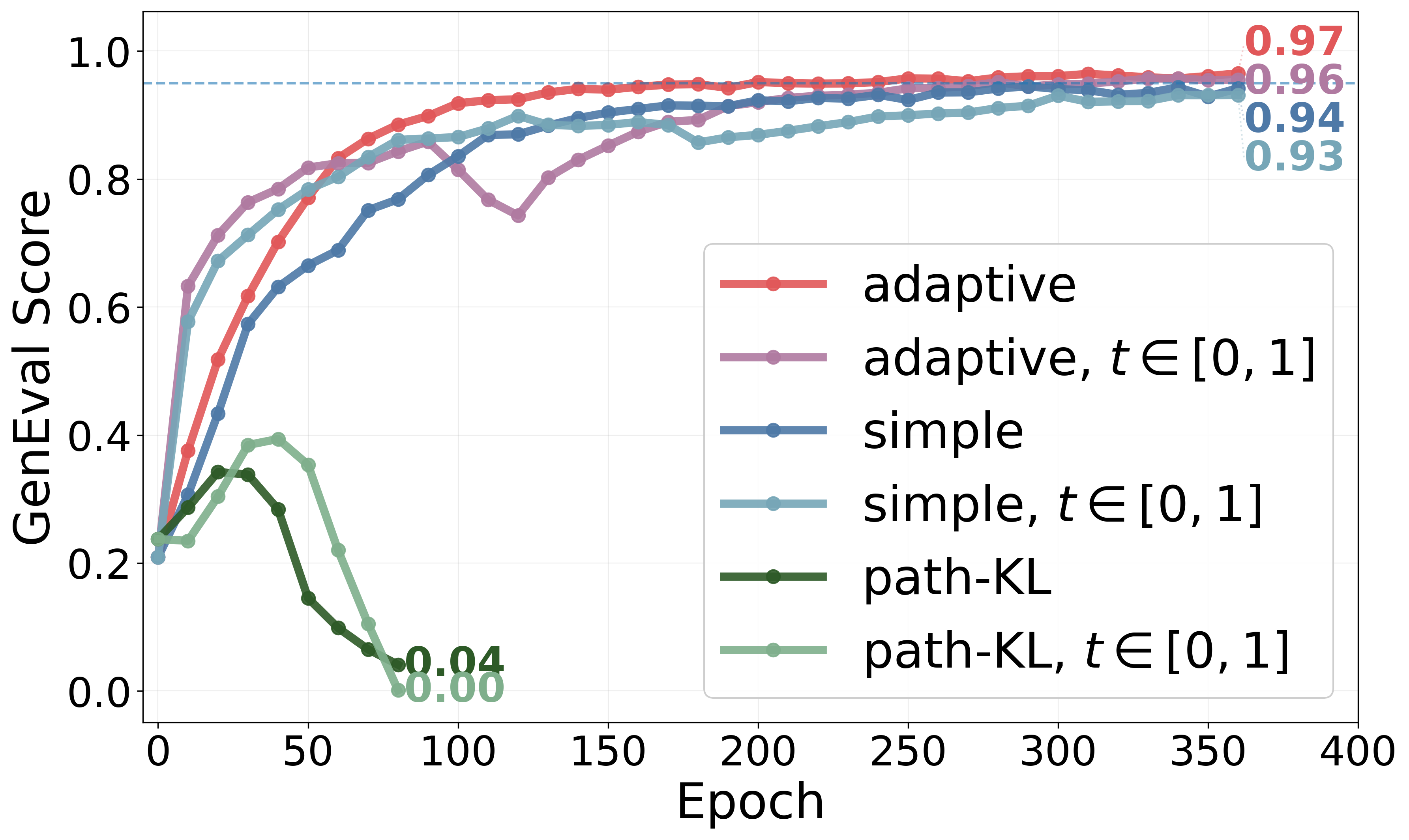}
    \caption{\textbf{Ablation on ELBO estimators} on \texttt{GenEval}.}
    \label{fig:LV_comp}
\end{figure}

\subsection{Further Discussion}
\label{subsec:further_results}
\noindent \textbf{Ablation on ELBO Estimations }
We study the effect of different ELBO estimation strategies. Specifically, we consider three ELBO formulations: a path-KL weighted estimator \cref{eq:elbo_exact}, a simple weighted ELBO estimator \cref{eq:elbo_simple}, and an adaptive ELBO estimator \cref{eq:elbo_adaptive}. Each ELBO can be estimated using different Monte Carlo schemes, either by sampling a single diffusion timestep or by aggregating estimates across multiple timesteps over the entire diffusion trajectory. We evaluate all combinations using the PEPG objective with ELBO-based likelihood estimation and ODE sampling. 
As shown in \cref{fig:LV_comp}, four ELBO estimators, except the two path-KL weighted ones, show stable training and strong performance. Moreover, both single-timestep and whole-timestep (i.e. $t\in[0,1]$) estimation strategies achieve comparable results. Given its lower computational cost and simpler implementation, we recommend the single-timestep ELBO estimator in practice. Details for ELBO estimation are discussed in \cref{subapp:elbo}.

\noindent \textbf{Clipping is not an important design choice }
As shown in \cref{tab:main_comp}, the GRPO and EPG objectives differ primarily in the use of clipping and normalization by the standard deviation. Removing these design choices yields comparable performance across benchmarks, indicating that such heuristics are not critical to achieving strong results in our setting.
{To further substantiate this claim, we note that GRPO with clipping under ELBO+ODE achieves a GenEval score of $0.94$, while EPG, PEPG, and PAR (all without clipping) achieve scores of approximately $0.96$, demonstrating that clipping offers no performance advantage and may even slightly hinder optimization. Similarly, advantage normalization by the standard deviation introduces a prompt-level difficulty bias: prompts that are too easy or too difficult tend to have smaller reward standard deviations, skewing updates toward these cases~\citep{liu2025understanding}. This effect is exacerbated for continuously distributed rewards common in diffusion model RL.}

\noindent \textbf{Fine-tuning without CFG }
Consistent with our formulation and supported by prior work~\citep{zheng2025diffusionnft}, we train reward-based diffusion fine-tuning without CFG \citep{ho2022classifier}. Our results show that competitive performance can be achieved even without CFG across benchmarks. Moreover, when combined with ELBO-based likelihood estimation and ODE sampling, CFG-free training yields substantial efficiency gains. Eliminating CFG further reduces the number of function evaluations required during sampling, simplifying both training and inference. {Specifically, removing CFG reduces the number of function evaluations by approximately $2\times$ during sampling, yielding a substantial reduction in training cost without sacrificing performance.} For these reasons, we recommend CFG-free fine-tuning as a practical and effective default.

\section{Related Works}
\paragraph{RL for language models. } RL has been extensively verified as a principled method to enhance the model capability \citep{ouyang2022training, guo2025deepseek}, where popular approaches include PPO \citep{schulman2017proximal}, GRPO \citep{shao2024deepseekmath} and its variants \citep{liu2025understanding, yu2025dapo, ahmadian2024back}. Recent algorithmic developments on RL methods focus on new objective designs \citep{zhao2025geometric, zheng2025group, gao2025soft, chen2025minimax, team2025kimi} and their unbiased estimation \citep{zhang2025design, zheng2025stabilizing, liu-li-2025-rl-collapse} to ensure stable training. 

\paragraph{RL for Diffusion and Flow models. } RL has also been widely adopted to post-train diffusion and flow models to align model output with human preference \citep{fan2023dpok, black2023training, domingo2024adjoint}. FlowGRPO \citep{liu2025flow} first adapts GRPO to diffusion models through a rough trajectory-based likelihood estimation for data likelihood and achieves decent results on text-to-image generation, followed by a series of works improving it \citep{he2025tempflow, wang2025grpo, xue2025dancegrpo, wang2025pref, ye2025data, xue2025advantage}. On a separate line of work, DiffusionNFT \citep{zheng2025diffusionnft} adapts negative-finetuning (NFT) \citep{chen2025bridging} to diffusion and flow models. {DPPO~\citep{ren2025diffusion} applies PPO to diffusion policies for continuous control and robotic manipulation tasks, demonstrating that careful adaptation of policy-gradient methods to the diffusion framework can yield strong performance. Q-score matching~\citep{psenka2024learning} takes a different approach by learning a diffusion model policy from reward signals via matching the score function with a Q-function gradient, bypassing explicit likelihood computation. Concurrently, our PEPG loss is closely related to QUATRO~\citep{lee2026quatro}. The main difference is that QUATRO optimizes an exact trust-region bound objective, whereas our method adopts a proximal objective without considering any trust-region bounds.}

\section{Conclusion}
In this work, we conducted a systematic study of the design space of RL for diffusion models. Through experiments on text-to-image generation tasks, we found that a high-quality likelihood estimator plays a more important role in algorithmic success than the specific choice of policy gradient objective. While the observed trends are consistent across our evaluated settings, broader evaluations on additional architectures, tasks, and diversity metrics would further strengthen the generality of our conclusions. Extending this study to larger-scale models, text-to-video generation, discrete diffusion language models, and broader generative modeling settings is an important direction for future work.


\section*{Acknowledgments}
The authors are grateful for partial supports from NSF Grants ECCS-1942523, DMS-2206576, 2409016, 2450378, and AFOSR Grant FA9550-25-1-0169. JC is supported by National Research Foundation of Korea (NRF) grants (RS-2024-00342044). YZ and MT gratefully acknowledge the partial supports by NSF Grant DMS-2513699 (YZ \& MT), DOE Grants NA0004261 (MT), SC0026274 (YZ \& MT),  Richard Duke Fellowship (YZ \& MT), and Simons Institute for the Theory of Computing at UC Berkeley (MT). WG acknowledges Georgia Tech ARC-ACO Fellowship for partial support.

\section*{Impact Statement}
This paper presents work aimed at advancing the field of generative models. It proposes a new approach to training improved image generators using reinforcement learning. For example, it may be abused to create various harmful and offensive content. We strongly caution the community against such use cases.

\bibliography{paper}
\bibliographystyle{icml2026}

\newpage
\appendix
\crefalias{section}{appendix}
\crefalias{subsection}{appendix}
\crefalias{subsubsection}{appendix}
\onecolumn

\section{Proof of \cref{thm:pg_equivalence}}\label{app:proof}
\paragraph{Derivation of Policy-Gradient Objectives} We show that EPG, PEPG, and PAR share the target optimal distribution as objective minimizers. 
We omit the condition $\bm{c}$ as all probabilities are conditioned given a prompt $\bm{c}$. We denote the policy of the pretrained model as $\pi_{\text{ref}}$.
For convenience, we denote the advantage as
\begin{align*}
    A(\bm{x}_0) := \frac{\eta}{\beta} \left( R(\bm{x}_0) - b \right).
\end{align*}
The policy gradient losses \cref{eq:epg}, \cref{eq:pepg}, and \cref{eq:par} are the Monte Carlo simulation with $G$ samples of \cref{eq:appen_epg,eq:appen_pepg,eq:appen_par}, respectively, with the baseline given by $b = \mathbb{E}_{\bm{x}_0 \sim \piold} [R(\bm{x}_0)]$ and estimated by these samples.


\paragraph{Exact Policy Gradient (EPG)}
Given the $\piold$ and $\pi_{\text{ref}}$, we could reformulate \cref{eq:epg} a loss functional of $\pitheta$ as follows:
\begin{align}
\mathcal{L}_{\text{epg}}\!\left(\pi_\theta; \piold, \pi_{\text{ref}} \right)
&:=
\underset{{\bm{x}_0 \sim \piold}}{\mathbb{E}}
\Bigl[ A(\bm{x}_0) \frac{\mathrm{sg} \left(\pi_{\theta}\right) }{ \piold}(\bm{x}_0)\,
\log \pitheta(\bm{x}_0)
\Bigr] - \eta\,\KL\!\left(\pitheta \,\|\, \pi_{\text{ref}} \right),
\label{eq:appen_epg}
\end{align}
Note that the $\theta$-gradient of \cref{eq:appen_epg} is the same as the following loss:
\begin{align}
\mathcal{L}_{\text{epg}} \!\left(\pi_\theta; \piold, \pi_{\text{ref}} \right)
&\!=\! \underset{{\bm{x}_0 \sim \piold}}{\mathbb{E}}
\Bigl[
A(\bm{x}_0)\,
\frac{\pitheta}{\piold}(\bm{x}_0)
\Bigr] - \eta \,\KL\!\left(\pitheta \,\|\, \pi_{\text{ref}} \right).
\label{eq:pg_loss}
\end{align}
By taking the first variation of $L[\pi_\theta]:=\mathcal{L}_{\text{epg}}[\pi_\theta]+C(\int\pi_\theta(\bm{x}_0)\mathrm{d}\bm{x}_0-1)$ w.r.t. $\pitheta$ (where $C\in\mathbb{R}$ is the Lagrangian multiplier), we can obtain a explicit description of an optimal policy  as follows:
\begin{align}\label{eq:pf_epg}
    \begin{split}
    0 = \frac{\delta L}{\delta \pitheta} (\pitheta;\bm{x}_0)
    &=A(\bm{x}_0) - \eta \log \frac{\pitheta}{\pi_{\text{ref}}} (\bm{x}_0) - \eta + C\\
    &= \eta \left( \frac{1}{\beta} \left( R(\bm{x}_0) - b \right) - \log  \frac{\pitheta}{\pi_{\text{ref}}} (\bm{x}_0) - 1 + \frac{C}{\eta}\right).
    \end{split}
\end{align}
By organizing \cref{eq:pf_epg}, we obtain the optimal policy $\pi^\star_{\text{epg}}$ as follows:
\begin{align*}
    \pi^\star_{\text{epg}} (\bm{x}_0 \mid \bm{c}) \propto \exp(R(\bm{x}_0 \mid \bm{c})/\beta) \pi_{\text{ref}}(\bm{x}_0 \mid \bm{c}).
\end{align*}

\paragraph{Proximal Exact Policy Gradient (PEPG)}
Similarly, we mathematically reformulate \cref{eq:pepg} as the following loss with the same $\theta$-gradient:
\begin{align}
\mathcal{L}_{\text{pepg}}\!\left(\pi_\theta; \piold, \pi_{\text{ref}} \right)
&:=
\underset{{\bm{x}_0 \sim \piold}}{\mathbb{E}}
\left[ \left( A(\bm{x}_0) - \log \frac{\mathrm{sg}\left(\pitheta\right)}{\piold} (\bm{x}_0) \right) \frac{\mathrm{sg} \left(\pi_{\theta}\right) }{ \piold}(\bm{x}_0)\,
\log \pitheta(\bm{x}_0)\right] - \eta\,\KL\!\left(\pitheta \,\|\, \pi_{\text{ref}} \right).
\label{eq:appen_pepg}
\end{align}
We adopt an iterative policy optimization scheme in which a sampling policy is maintained and updated across stages. At stage $k$, we denote the sampling policy by $\pi_k \equiv \pi_{\text{old}}$, which corresponds to the policy obtained from the previous iteration. Then, our optimization scheme can be written as follows:
\begin{align}
\pi_{k+1} = \argmin_{\pitheta} \mathcal{L}_{\text{pepg}}\!\left(\pitheta; \pi_{k}, \pi_{\text{ref}} \right), \quad \text{for} \ k\in \{1, 2, \dots \}.
\label{eq:proximal-stage}
\end{align}
This update admits a closed-form solution at the level of path measures:
\begin{align}
\pi_{k+1}(\bm{x}_0\mid \bm{c})
&\!\propto\!
\Bigl(
\exp(R(\bm{x}_0\mid \bm{c})/\beta)\,\piref(\bm{x}_0 \mid \bm{c})
\Bigr)^{\frac{\eta}{1+\eta}}
\Bigl(
\pi_{k}(\bm{x}_0 \mid \bm{c})
\Bigr)^{\frac{1}{1+\eta}} \!.
\label{eq:optimality-proximal}
\end{align}
\begin{proof}
    Take the first variation of $L[\pi_\theta]:=\mathcal{L}_{\text{pepg}}[\pi_\theta]+C(\int\pi_\theta(\bm{x}_0)\mathrm{d}\bm{x}_0-1)$ w.r.t. $\pitheta$ (where $C\in\mathbb{R}$ is the Lagrangian multiplier):
    \begin{align}\label{eq:pf_pepg}
        \begin{split}
        0 = \frac{\delta L}{\delta \pitheta} (\pitheta;\bm{x}_0) &= 
        \left( A(\bm{x}_0) - \log \frac{\mathrm{sg} \left(\pi_{\theta}\right) }{ \piold}(\bm{x}_0) \right) \frac{\mathrm{sg} \left(\pi_{\theta}\right) }{ \piold}(\bm{x}_0) \frac{\piold}{\mathrm{sg} \left(\pi_{\theta}\right)}(\bm{x}_0)  - \eta \log \frac{\pitheta}{\pi_{\text{ref}}} (\bm{x}_0) - \eta + C\\
        &=A(\bm{x}_0) - \log \frac{\pi_{\theta}}{ \piold}(\bm{x}_0) - \eta \log \frac{\pitheta}{\pi_{\text{ref}}} (\bm{x}_0) - \eta +C\\
        &=  \frac{\eta}{\beta} \left( R(\bm{x}_0) - b \right) - \log  \frac{\pitheta^{\eta+1}}{\piold \pi^\eta_{\text{ref}}} (\bm{x}_0) - \eta +C.
        \end{split}
    \end{align}
    By organizing \cref{eq:pf_pepg}, we obtain \cref{eq:optimality-proximal}.
\end{proof}
As $k\to\infty$, the sequence $\{\pi_k\}$ converges to the target path measure $\pi^\star(\bm{x}_0 \mid \bm{c}) \propto \exp(R(\bm{x}_0 \mid \bm{c})/\beta) \pi_{\text{ref}}(\bm{x}_0 \mid \bm{c})$. This result shows that the PEPG provides an exact and principled alternative to GRPO.

\paragraph{Proximal Advantage Regression (PAR)}
Finally, we consider an advantage regression, or advantage matching, loss function in \cref{eq:par}, which can be reformulated as follows:
\begin{align}
\mathcal{L}_{\text{par}}\!\left(\pi_\theta; \piold, \pi_{\text{ref}} \right)
&:=
\underset{{\bm{x}_0 \sim \piold}}{\mathbb{E}}
\left[ {-\frac{1}{2}} \frac{\mathrm{sg}\left(\pitheta\right)}{\piold} (\bm{x}_0) \left\Vert  A(\bm{x}_0) - \log \frac{\pitheta}{\piold} (\bm{x}_0) \right\Vert^2 \right] - \eta\,\KL\!\left(\pitheta \,\|\, \pi_{\text{ref}} \right).
\label{eq:appen_par}
\end{align}
This loss penalizes deviations of the log-likelihood ratio from the advantage function via least-square regression. {The key role of the stop-gradient operator $\operatorname{sg}(\cdot)$ in \cref{eq:appen_par} is to prevent the chain rule from producing an extra residual factor when computing the first variation. Specifically, treating $\operatorname{sg}(\pitheta)/\piold$ as a constant weight, the first variation of the squared term with respect to $\pitheta$ yields only the inner derivative $-1/\pitheta$, which cancels with the $\operatorname{sg}(\pitheta)/\piold$ prefactor (since $\operatorname{sg}(\pitheta) = \pitheta$ at evaluation). This simplification makes the first-order optimality condition of PAR identical to that of PEPG, as shown in \cref{eq:pf_par}.}
We adopt an iterative policy optimization scheme in which a sampling policy is maintained and updated across stages. At stage $k$, we denote the sampling policy by $\pi_k \equiv \pi_{\text{old}}$, which corresponds to the policy obtained from the previous iteration. Then, our optimization scheme can be written as follows:
\begin{align}
\pi_{k+1} = \argmin_{\pitheta} \mathcal{L}_{\text{par}}\!\left(\pitheta; \pi_{k}, \pi_{\text{ref}} \right), \quad \text{for} \ k\in \{1, 2, \dots \}.
\label{eq:proximal-stage2}
\end{align}
This update also admits the same closed-form solution \cref{eq:optimality-proximal}.
\begin{proof}
    Take the first variation for $L[\pi_\theta]:=\mathcal{L}_{\text{par}}[\pi_\theta]+C(\int\pi_\theta(\bm{x}_0)\mathrm{d}\bm{x}_0-1)$ w.r.t. $\pitheta$ (where $C\in\mathbb{R}$ is the Lagrangian multiplier):
    \begin{align}\label{eq:pf_par}
        \begin{split}
        0 = \frac{\delta L}{\delta \pitheta} (\pitheta;\bm{x}_0) &= 
        \left( A(\bm{x}_0) - \log \frac{\mathrm{sg} \left(\pi_{\theta}\right) }{ \piold}(\bm{x}_0) \right) \frac{\mathrm{sg} \left(\pi_{\theta}\right) }{ \piold}(\bm{x}_0) \frac{\piold}{\mathrm{sg} \left(\pi_{\theta}\right)}(\bm{x}_0)  - \eta \log \frac{\pitheta}{\pi_{\text{ref}}} (\bm{x}_0) - \eta +C\\
        &=A(\bm{x}_0) - \log \frac{\pi_{\theta}}{ \piold}(\bm{x}_0) - \eta \log \frac{\pitheta}{\pi_{\text{ref}}} (\bm{x}_0) - \eta +C\\
        &=  \frac{\eta}{\beta} \left( R(\bm{x}_0) - b \right) - \log  \frac{\pitheta^{\eta+1}}{\piold \pi^\eta_{\text{ref}}} (\bm{x}_0) - \eta +C.
        \end{split}
    \end{align}
    By organizing \cref{eq:pf_par}, we obtain \cref{eq:optimality-proximal}.
\end{proof}
As $k\to\infty$, the sequence $\{\pi_k\}$ converges to the target path measure $\pi^\star(\bm{x}_0 \mid \bm{c}) \propto \exp(R(\bm{x}_0 \mid \bm{c})/\beta) \pi_{\text{ref}}(\bm{x}_0 \mid \bm{c})$.
This loss shares structural similarities with the objectives used in GFlowNet~\citep{bengio2021flow} and related variants~\citep{team2025kimi, malkin2022trajectory}.

\section{Additional Technical Details}

\subsection{ELBO weighting}
\label{sec:elbo_app}
Various ELBO objectives have been proposed for training diffusion and flow models effectively \citep{song2020score, kingma2021variational, kingma2023understanding, karras2022elucidating, shi2025demystifying}. The ELBO variants differ mostly in how they weight $w(t)$ and the regression objective (i.e., the $\bm{\epsilon}, \bm{x}, \bm{v}$-loss) \citep{li2025back}. For the simplicity of demonstration, we unified each objective considered in this work in the same form of $\bm{v}$-loss $\mathbb{E} \|\vtheta - \bm{v}\|_2^2$ with different weighting. We considered several different ELBO.

\noindent \textbf{Path-KL weighting}: Following the original derivation in Score-based diffusion model \citep{song2020score} that uses variational inference and simplifies the KL divergence between the forward and backward path measure, the ELBO is equivalent to with weighting $w(t) = \frac{1-t}{t}$ in $\bm{v}$-loss,
    \begin{align}
    \label{eq:elbo_exact}
        \operatorname{ELBO}_{\text{path}}(\vtheta, \bm{x}_0) = \mathbb{E}_{t, \bm{\epsilon}} \left[ \frac{1-t}{t}\big\| \vtheta - \bm{v} \big\|_2^2\right]
    \end{align}

\noindent \textbf{Simple weighting}: Apart from path-KL weighting, constant weighting across all $t$ is also shown to achieve decent performance in diffusion training \citep{ho2020denoising, shi2025demystifying}. Following a similar intuition, we consider the following simply weighted ELBO with $w(t) = 1$,
    \begin{align}
        \label{eq:elbo_simple}
        \operatorname{ELBO}_{\text{simple}}(\vtheta, \bm{x}_0) = \mathbb{E}_{t, \bm{\epsilon}} \Big[\big\| \vtheta - \bm{v} \big\|_2^2\Big]
    \end{align}

\noindent \textbf{Adaptive weighting}: Besides time-dependent only weighting, prior works \citep{yin2024one, zheng2025diffusionnft} have also adopted data-dependent weighting that self-normalizes the objective to ensure numerical robustness. We similarly consider such a formulation, express in $\bm{v}$-loss as,
    \begin{align}
        \label{eq:elbo_adaptive}
    \operatorname{ELBO}_{\text{adapt}}(\vtheta, \bm{x}_0) =  \mathbb{E}_{t, \bm{\epsilon}} \left[\frac{t \cdot d \cdot \big\| \vtheta - \bm{v} \big\|_2^2}{\operatorname{sg}(\big\| \vtheta - \bm{v} \big\|_1)}\right]
    \end{align}

\subsection{Connection between methods}
\label{sec:grpo_app}
Existing works, such as AWM \citep{xue2025advantage} and FlowGRPO \citep{liu2025flow}, have also adapted GRPO to diffusion models and can be unified under a common lens of likelihood estimation. AWM uses ELBO \cref{eq:prob-elbo} with simple weighting \cref{eq:elbo_simple} and a $1$ Monte Carlo sample, computing probability ratio as,
\begin{align*}
\dfrac{\pitheta(\bm{x}_0)}{\piold(\bm{x}_0)} = \exp\left(\big\| \vold(\bm{x}_t, t) - \bm{v} \big\|_2^2 - \big\| \vtheta(\bm{x}_t, t) - \bm{v} \big\|_2^2\right)
\end{align*}
FlowGRPO adopts the trajectory estimator \cref{eq:prob-traj}, with the additional approximation that express probabilit ratio as using sum-of-exp rather than exp-of-sum,
\begin{align*}
    \dfrac{\pitheta(\bm{x}_0)}{\piold(\bm{x}_0)} & = \exp\left(\sum_{i=1}^{N} \log \dfrac{p_{\theta}(\bm{x}_{t_{i-1}}|\bm{x}_{t_i})}{p_{\text{old}}(\bm{x}_{t_{i-1}}|\bm{x}_{t_i})}\right) \\
    & \approx \sum_{i=1}^{N} \exp\left(\log \dfrac{p_{\theta}(\bm{x}_{t_{i-1}}|\bm{x}_{t_i})}{p_{\text{old}}(\bm{x}_{t_{i-1}}|\bm{x}_{t_i})}\right) = \sum_{i=1}^{N}\dfrac{p_{\theta}(\bm{x}_{t_{i-1}}|\bm{x}_{t_i})}{p_{\text{old}}(\bm{x}_{t_{i-1}}|\bm{x}_{t_i})}
\end{align*}
Moreover, Flow-GRPO applies a clipping operation to each term in the sum individually. These differences distinguish AWM and FlowGRPO from our adapted version of GRPO reported in \cref{tab:main_comp}.

\subsection{Details on ELBO and Loss computation}
\label{subapp:elbo}

\paragraph{ELBO Computation}
We consider several practical strategies for computing the ELBO-based likelihood estimators introduced in \cref{eq:elbo_exact}, \cref{eq:elbo_simple}, and \cref{eq:elbo_adaptive}. In principle, an accurate ELBO estimate requires approximating expectations over diffusion time and noise. Specifically, for a given clean sample $\bm{x}_0$, the ELBO involves the expectation
\[
\mathbb{E}_{t,\bm{\epsilon}} \left[ \left\| \vtheta(\bm{x}_t, t) - \bm{v}(\bm{x}_t, t) \right\|^2 \right],
\]
where $(t, \bm{\epsilon}) \sim U[0,1] \times \mathcal{N}(\bm{0}, \bm{I})$ and $\bm{x}_t = (1-t)\bm{x}_0 + t \bm{\epsilon}$. A straightforward Monte Carlo approximation draws $M$ independent samples $\{(t_i, \bm{\epsilon}_i)\}_{i=1}^M$ and estimates the expectation as
\begin{align}
\mathbb{E}_{t,\bm{\epsilon}} \!\left[ \left\| \vtheta(\bm{x}_t, t) - \bm{v}(\bm{x}_t, t) \right\|^2 \right]
\;\approx\;
\frac{1}{M} \sum_{i=1}^M
\left\|
\vtheta(\bm{x}_{t_i}, t_i)
-
\bm{v}(\bm{x}_{t_i}, t_i)
\right\|^2,
\end{align}
with $\bm{x}_{t_i} = (1-t_i)\bm{x}_0 + t_i \bm{\epsilon}_i$.
While this estimator is unbiased, evaluating the ELBO using multiple samples per data point can be computationally expensive. To reduce this cost, we consider two simplified Monte Carlo schemes that trade variance for efficiency: 
\begin{itemize}
    \item \textbf{Single-timestep estimation.}
    Let the diffusion time interval be discretized as
    $
    T := \{ t_1 := 1/N, \ldots, t_N := 1 \}
    $
    (see \cref{alg:general_elbo_pg}). For a given timestep $t_i \in T$ and noise sample $\bm{\epsilon} \sim \mathcal{N}(\bm{0}, \bm{I})$, we define
    $
    \bm{x}_{t_i} = (1 - t_i)\bm{x}_0 + t_i \bm{\epsilon}.
    $
    The ELBO expectation is then approximated using a single Monte Carlo sample:
    \begin{align}
        \mathbb{E}_{t,\bm{\epsilon}}
        \!\left[
        \left\|
        \vtheta(\bm{x}_t, t) - \bm{v}(\bm{x}_t, t)
        \right\|^2
        \right]
        \;\approx\;
        \left\|
        \vtheta(\bm{x}_{t_i}, t_i) - \bm{v}(\bm{x}_{t_i}, t_i)
        \right\|^2.
    \end{align}
    For instance, under the simple weighting scheme in \cref{eq:elbo_simple}, the importance-weighted policy-gradient term can be approximated as
    \begin{align}
        \frac{\mathrm{sg}(\pitheta)}{\piold}(\bm{x}_0)
        \log \pitheta(\bm{x}_0)
        \;\approx\;
        -\,
        \frac{
        \exp\!\left(
        -\left\|
        \vtheta(\bm{x}_{t_i}, t_i) - \bm{v}(\bm{x}_{t_i}, t_i)
        \right\|^2
        \right)
        }{
        \exp\!\left(
        -\left\|
        \vold(\bm{x}_{t_i}, t_i) - \bm{v}(\bm{x}_{t_i}, t_i)
        \right\|^2
        \right)
        }
        \left\|
        \vtheta(\bm{x}_{t_i}, t_i) - \bm{v}(\bm{x}_{t_i}, t_i)
        \right\|^2.
    \end{align}

    \item \textbf{All-timestep estimation.}
    An alternative approach estimates the ELBO by aggregating contributions across all discretized timesteps:
    \begin{align}
        \mathbb{E}_{t,\bm{\epsilon}}
        \!\left[
        \left\|
        \vtheta(\bm{x}_t, t) - \bm{v}(\bm{x}_t, t)
        \right\|^2
        \right]
        \;\approx\;
        \frac{1}{N}
        \sum_{i=1}^N
        \left\|
        \vtheta(\bm{x}_{t_i}, t_i) - \bm{v}(\bm{x}_{t_i}, t_i)
        \right\|^2.
    \end{align}
    In this case, the importance ratio
    $
    \frac{\mathrm{sg}(\pitheta)}{\piold}(\bm{x}_0)
    $
    can be precomputed by evaluating the ELBO across the full set of timesteps using shared noise realizations. The resulting ratio is then reused across all timestep contributions along the same trajectory.
\end{itemize}

Although the all-timestep estimator provides a more faithful Monte Carlo approximation of the ELBO, our empirical results in \cref{fig:LV_comp} show that both estimators achieve comparable performance. Given its lower computational cost and simpler implementation, we therefore recommend the single-timestep estimator as a practical default.

\paragraph{KL Divergence}
Now, we describe how we compute KL divergence between $\pitheta$ and the reference policy $\piref$ in a closed form. Here, we assume that $\piold \approx \pitheta$, and solve
\begin{align}
    \KL\!\left(\pitheta \,\|\, \pi_{\text{ref}} \right) = \mathbb{E}_{t} \mathbb{E}_{\bm{x}_t} \left[ \frac{1}{2}  \left( \frac{g_t (1-t)}{2t} + \frac{1}{g_t} \right)^2 \Vert \vtheta(\bm{x}_t, t) - \vref(x_t, t) \Vert^2 \right] = \mathbb{E}_{t} \mathbb{E}_{\bm{x}_t} w(t) \Vert \vtheta(\bm{x}_t, t) - \vref(x_t, t) \Vert^2,
\end{align}
where $w(t) = a^2 \frac{1-t}{t}$.
We simply use simple approximation, i.e. $w(t) = 1$ in practice, following \citep{zheng2025diffusionnft, xue2025advantage}.

\section{Implementation Details}\label{app:impl}
\paragraph{Algorithm}
\cref{alg:general_elbo_pg} presents a unified framework for reward-based diffusion fine-tuning that accommodates different policy-gradient objectives, likelihood estimators, and sampling strategies. Starting from a pretrained diffusion velocity field $\bm{v}$, we iteratively improves the model by \textbf{(1) sampling} data $\bm{x}_0$ from a reference (old) policy $\pi_{\theta_{\text{old}}}$ (i.e. sample with $\vold$), \textbf{(2) computing likelihood} by trajectory-based or ELBO-based estimator,  and \textbf{(3)} updating the current policy using a chosen \textbf{policy-gradient loss}. 
After updating the model parameters, the old policy is updated via an exponential moving average (EMA) of the current parameters, ensuring a slowly evolving reference policy that stabilizes optimization. This unified procedure enables efficient and stable fine-tuning across a broad class of objectives and likelihood estimation schemes. The ema decay rate is $\alpha_i$ where $i$ is the number of current epoch.
Moreover, note that sampling batch over training mini-batch is the number of gradient steps per epoch.

\begin{algorithm}[ht]
\caption{Unified reward-based diffusion fine-tuning algorithm}
\label{alg:general_elbo_pg}
\begin{algorithmic}[1]
\REQUIRE Loss $\mathcal{L}\in\{\cref{eq:epg},~\cref{eq:pepg},~\cref{eq:par}\}$, likelihood estimator $\textsc{Likelihood}(\bm{v},\bm{x}, t, \bm{c})$, data sampler $\bm{x}_0 = \textsc{Sampler}(\bm{v}, \bm{x}_1, \bm{c})$.
\REQUIRE Pretrained velocity $\vref$, reward $r(\cdot)$, scale $\beta$, regularization $\eta$, decay type $\{\alpha_i\}$, timesteps $T = \{1/N, 2/N,  \dots, 1\}$.

\STATE Initialize $\vtheta \leftarrow \vref$.
\FOR{$i\in \{1,2 , \dots \}$}
    \STATE Sample a batch of prompts $\bm{c}$ and $\bm{x}_1 \sim \mathcal{N}(\bm{0}, \bm{I})$.
    \STATE Sample a batch of data $\bm{x}_0 \leftarrow \textsc{Sampler}(\vold, \bm{x}_1, \bm{c})$.
    \STATE Sample data by $\bm{x}_0 \leftarrow \textsc{Sampler}(\vold, \bm{x}_1, \bm{c})$.
    \FOR{minibatch}
        \FOR{$t\in T$}
            \STATE Compute $\pi_{\theta_{\text{old}}} (\bm{x}_0\mid \bm{c}) \leftarrow \textsc{Likelihood}(\vold, \bm{x}_0, t, \bm{c})$.
            \STATE Compute $\pi_\theta (\bm{x}_0\mid \bm{c}) \leftarrow \textsc{Likelihood}(\bm{v}_\theta, \bm{x}_0, t, \bm{c})$.
            \STATE Accumulate the loss $\mathcal{L}$ with estimated $\pi_\theta (\bm{x}_0\mid \bm{c})$ and $\pi_{\theta_\text{old}} (\bm{x}_0\mid \bm{c})$.
        \ENDFOR
        \STATE Update $\vtheta$.
    \ENDFOR
    \STATE Update $\theta_{\text{old}}\leftarrow (1-\alpha_i) \theta + \alpha_i \theta_{\text{old}}$.
\ENDFOR
\end{algorithmic}
\end{algorithm}

\paragraph{Hyperparameter Settings}
Our experimental setup largely follows DiffusionNFT \citep{zheng2025diffusionnft} and FlowGRPO \citep{liu2025flow}. For each epoch, we use 48 prompts, and 24 rollouts (or group size) per each prompts. We use LoRA \citep{hu2022lora} configuration of $\alpha = 64$, $r = 32$, and learning rate ($3\times 10^{-4}$). For each collected clean image, forward noising and loss computation are performed exactly at the corresponding sampling timesteps. We employ a second-order ODE sampler for data collection and enable adaptive time weighting by default. 
We follow the same KL divergence estimation procedure as FlowGRPO and DiffusionNFT, approximated through the Girsanov theorem.
For each epoch, we use $48$ prompts. For experiments involving SDE-based sampling, we use a noise level of $0.7$.
We further adopt the same exponential moving average (EMA) decay scheme as DiffusionNFT, where \emph{decay type 1} is $\alpha_i = \min(0.001i, 0.5)$, and \emph{decay type 2} is $\alpha_i = \min(0.01i, 0.8)$ in \cref{alg:general_elbo_pg}. Moreover, among all configurations, only the EPG objective combined with ELBO-based likelihood estimation and SDE sampling required gradient clipping (with a threshold of 0.01) to ensure training stability. Gradient clipping was not applied in other settings. Additional hyperparameter configurations are summarized in \cref{tab:impl_details}.
\begin{table*}[h]
\centering
\caption{\textbf{Hyperparameter Settings} for training our methods.}
\label{tab:impl_details}

\setlength{\tabcolsep}{4pt}
\renewcommand{\arraystretch}{1}
\scalebox{0.9}{
\begin{tabular}{ccccccccccc}
\toprule
position & task & loss & sampler & \# steps & $\eta$ &  $\beta$ & $w(t)$ & decay type & \# epochs &  \# grad./epoch    \\
\midrule
\cref{tab:main_comp} (r.4) 
& GenEval &  \cref{eq:epg} 
& SDE 
& 40
& $10^{-4}$ & $10^{-3}$ & adaptive & 1 & 360 & 1
\\
\midrule
\cref{tab:main_comp} (r.5) 
& GenEval & \cref{eq:epg} 
& ODE & 10 
& $10^{-4}$ & $10^{-3}$ & adaptive & 1 & 360 & 2
\\
\midrule
\cref{tab:main_comp} (r.6) 
& GenEval & \cref{eq:pepg}
& SDE  & 40
& $10^{-4}$ & $10^{-3}$ & adaptive & 1 & 360 & 2
\\
\midrule
\multirow{4}{*}{
\shortstack[l]{\cref{tab:main_comp} (r.7)\\ \cref{tab:geneval} (r.10)\\ \cref{tab:eval_results} (r.9) \\ $\quad$ \cref{fig:LV_comp}}} 
& \multirow{4}{*}{GenEval} &  \multirow{4}{*}{\cref{eq:pepg}} 
& \multirow{4}{*}{ODE} & \multirow{4}{*}{10} & \multirow{4}{*}{$10^{-4}$} & \multirow{4}{*}{$10^{-3}$} & \multirow{4}{*}{adaptive} & \multirow{4}{*}{1} & \multirow{4}{*}{360} & \multirow{4}{*}{2} \\ \\ \\ \\
\midrule
\cref{tab:main_comp} (r.8) 
& GenEval &  \cref{eq:par} 
& SDE  & 40
& $10^{-4}$ & $10^{-3}$ & adaptive & 1 & 360 & 1
\\
\midrule
\cref{tab:main_comp} (r.9) 
& GenEval &  \cref{eq:par} 
& ODE & 10
& $10^{-4}$ & $10^{-3}$ & adaptive & 1 & 360 & 2
\\
\midrule
\cref{tab:eval_results} (r.13) 
& OCR &  \cref{eq:pepg} 
& ODE  & 10
& $10^{-4}$ & $10^{-3}$ & adaptive & 2 & 70 & 2
\\
\midrule
\cref{tab:eval_results} (r.16) 
& Multi-reward & \cref{eq:pepg} 
& ODE & 25
& $10^{-4}$ & $10^{-4}$ & adaptive & 2 & 500 & 2
\\
\midrule
\multirow{2}{*}{\cref{fig:LV_comp}} 
& \multirow{2}{*}{GenEval} & \multirow{2}{*}{\cref{eq:pepg}} & \multirow{2}{*}{ODE}  & \multirow{2}{*}{10} & \multirow{2}{*}{$10^{-4}$} & \multirow{2}{*}{$10^{-3}$} 
& simple & \multirow{2}{*}{1} & \multirow{2}{*}{360} & \multirow{2}{*}{2} \\
& & & & & & & path-KL \\
\midrule
\multirow{3}{*}{\cref{fig:LV_comp}} 
& \multirow{3}{*}{GenEval} & 
\multirow{3}{*}{\cref{eq:pepg}} & \multirow{3}{*}{ODE} & \multirow{3}{*}{10} & \multirow{3}{*}{$10^{-4}$} & \multirow{3}{*}{$10^{-3}$}  & simple, $t\in [0,1]$ & \multirow{3}{*}{1} & \multirow{3}{*}{360} & \multirow{3}{*}{1} \\
& & & & & & & path-KL, $t\in [0,1]$ &  \\
& & & & & & & adaptive, $t\in [0,1]$ &  \\
\bottomrule
\end{tabular}}
\end{table*}

\paragraph{Reward Functions}
For experiments on the GenEval \citep{ghosh2023geneval} benchmark, we use the GenEval score as the sole reward signal.
For the OCR task, we combine an OCR-based reward with human preference rewards, including PickScore \citep{kirstain2023pick}, CLIPScore \citep{hessel2021clipscore}, and HPSv2.1 \citep{wu2023human}.
For experiments on the OCR benchmark, we further consider a composite reward constructed by aggregating PickScore, CLIPScore, and HPSv2.1.
To account for differences in scale across reward functions, we rescale PickScore by a factor of $1/26$, following standard practice, so that its magnitude is comparable to the other reward terms.
After normalization, all rewards (OCR, PickScore, CLIPScore, and HPSv2.1) are combined with equal weights.
We also conduct experiments on the PickScore dataset using multiple reward functions, including PickScore, CLIPScore, and HPSv2.1, and apply the same weighting scheme.

\paragraph{Other Benchmarks}
We implement FlowGRPO, DiffusionNFT, and AWM based on their official GitHub repositories.
For DiffusionNFT, we adapt the reward functional to our single-stage training setting by using a weighted sum of multiple rewards, whereas the original implementation employs a multi-stage training procedure.
For AWM, we use the reported hyperparameters and disable classifier-free guidance (CFG) to ensure a fair comparison with our CFG-free training setup.
Unless otherwise specified, we follow the default configurations provided by each method. Key hyperparameter settings are summarized in \cref{tab:other_impl_details} for clarity.

\begin{table*}[h]
\centering
\caption{\textbf{Hyperparameter Settings} for other benchmarks.}
\label{tab:other_impl_details}

\setlength{\tabcolsep}{4pt}
\renewcommand{\arraystretch}{1}

\begin{tabular}{cccccccccc}
\toprule
Task & method & cfg & sampler & NFEs & noise sched. & \# epochs & \# prompts/epoch &  \# grad./epoch    \\
\midrule
\multirow{5}{*}{GenEval} & Ours (ODE) & - & ODE & 10 & - & 360 & 48 & 2 \\
& Ours (SDE) & - & SDE & 40 & 0.7 & 360 & 48 & 1 \\
& DiffusionNFT & - & ODE & 10 & - & 500 & 48 & 1 \\
& FlowGRPO & 4.5 & SDE & 10 & 0.7 & 4100 & 48 & 2 \\
& AWM & -  & ODE & 14 & - & 200 & 72 & 1 \\
\midrule
\multirow{3}{*}{OCR} & Ours (ODE) & - & ODE & 10 & - & 70 & 48 & 2 \\
& DiffusionNFT & - & ODE & 10 & - & 70 & 48 & 1 \\
& AWM & -  & ODE & 14 & - & 100 & 72 & 1 \\
\bottomrule
\end{tabular}
\end{table*}

\paragraph{Samplers}
Among various ODE-based samplers \citep{lu2022dpm, zhang2022fast}, we adopt DPM \citep{lu2022dpm}. For SDE-based sampling, we use the standard Euler–Maruyama scheme. Unless otherwise specified, we use 10 steps for ODE samplers and 40 steps for SDE samplers in our implementations.

\paragraph{Evaluation Metrics}
We evaluate all models using a suite of standard reward and quality metrics, including GenEval \citep{ghosh2023geneval}, PickScore \citep{kirstain2023pick}, CLIPScore \citep{hessel2021clipscore}, HPSv2.1 \citep{wu2023human}, Aesthetic Score \citep{schuhmann2022laionaesthetics}, and ImageReward \citep{xu2023imagereward}, following established evaluation protocols.
Our baseline comparisons are SD3.5-M \citep{esser2024scaling}, SD-XL, SD3.5-L, DALLE-3 \citep{betker2023improving}, GPT-4o \citep{achiam2023gpt}, and FLUX.1 Dev \citep{blackforestlabs2024flux}.
We evaluate all methods on GenEval and OCR using classifier-free guidance (CFG) \citep{ho2022classifier} with a scale of 4.5, and on DrawBench without CFG.

\section{Additional Experimental Results}\label{app:results}

\begin{table*}[h]
\centering
\caption{\textbf{Performance comparison on GenEval.}
We compare our method equipped with the proximal policy-gradient objective in \cref{eq:pepg}, ELBO-based likelihood estimation, and ODE sampling against prior RL-based diffusion fine-tuning methods and baseline models. As shown in the table, our approach achieves consistently strong performance across GenEval sub-tasks, matching or exceeding existing methods under a unified training configuration.}
\label{tab:geneval}

\setlength{\tabcolsep}{6pt}
\renewcommand{\arraystretch}{1.15}

\begin{tabular}{l ccccccc}
\toprule
Model &
Single Obj. &
Two Obj. &
Counting &
Color &
Position &
Attr &
Overall  \\
\midrule
DALL·E-3      & 0.96 & 0.87 & 0.47 & 0.83 & 0.43 & 0.45 & 0.67 \\
GPT-4o       & 0.99 & 0.92 & 0.85 & 0.92 & 0.75 & 0.61 & 0.84 \\
SD-XL        & 0.98 & 0.74 & 0.39 & 0.85 & 0.15 & 0.23 & 0.55 \\
FLUX.1-Dev   & 0.98 & 0.81 & 0.74 & 0.79 & 0.22 & 0.45 & 0.66 \\
SD3.5-L      & 0.98 & 0.89 & 0.73 & 0.83 & 0.34 & 0.47 & 0.71 \\
\midrule
SD3.5-M      & 0.98 & 0.78 & 0.50 & 0.81 & 0.24 & 0.52 & 0.63  \\
Flow-GRPO    & \textbf{1.00} & \textbf{0.99} & 0.95 & 0.92 & \textbf{0.99} & 0.86 & 0.95  \\
AWM    & 0.98 & 0.90 & 0.90 & 0.91 & 0.82 & 0.59 & 0.89  \\
DiffusionNFT & \textbf{1.00} & \textbf{0.99 }& \textbf{0.98} & 0.93 & 0.96 & \textbf{0.87} & 0.95 \\
\midrule
Ours         & \textbf{1.00} & \textbf{0.99} & \textbf{0.98} & \textbf{0.97} & \textbf{0.99} & \textbf{0.87} & \textbf{0.96} \\
\bottomrule
\end{tabular}
\end{table*}

{As shown in \cref{tab:geneval}, our method achieves the highest overall GenEval score of $0.96$, matching or exceeding all baselines across individual sub-tasks. Notably, our approach attains perfect or near-perfect scores on Single Object ($1.00$), Two Objects ($0.99$), and Position ($0.99$), while achieving the best Color score ($0.97$) by a notable margin. The Counting ($0.98$) and Attribute ($0.87$) sub-tasks also match the strongest baseline (DiffusionNFT), indicating that our ELBO-based method provides uniformly strong compositional generation without specializing to any particular sub-task. Among baselines, AWM shows relatively weaker performance on Position ($0.82$) and Attribute ($0.59$), suggesting that its optimization strategy may be less effective for spatially structured prompts.}

\begin{figure*}[h]
    \centering
    \begin{subfigure}[t]{0.48 \textwidth}
      \centering
      \includegraphics[width=\linewidth]{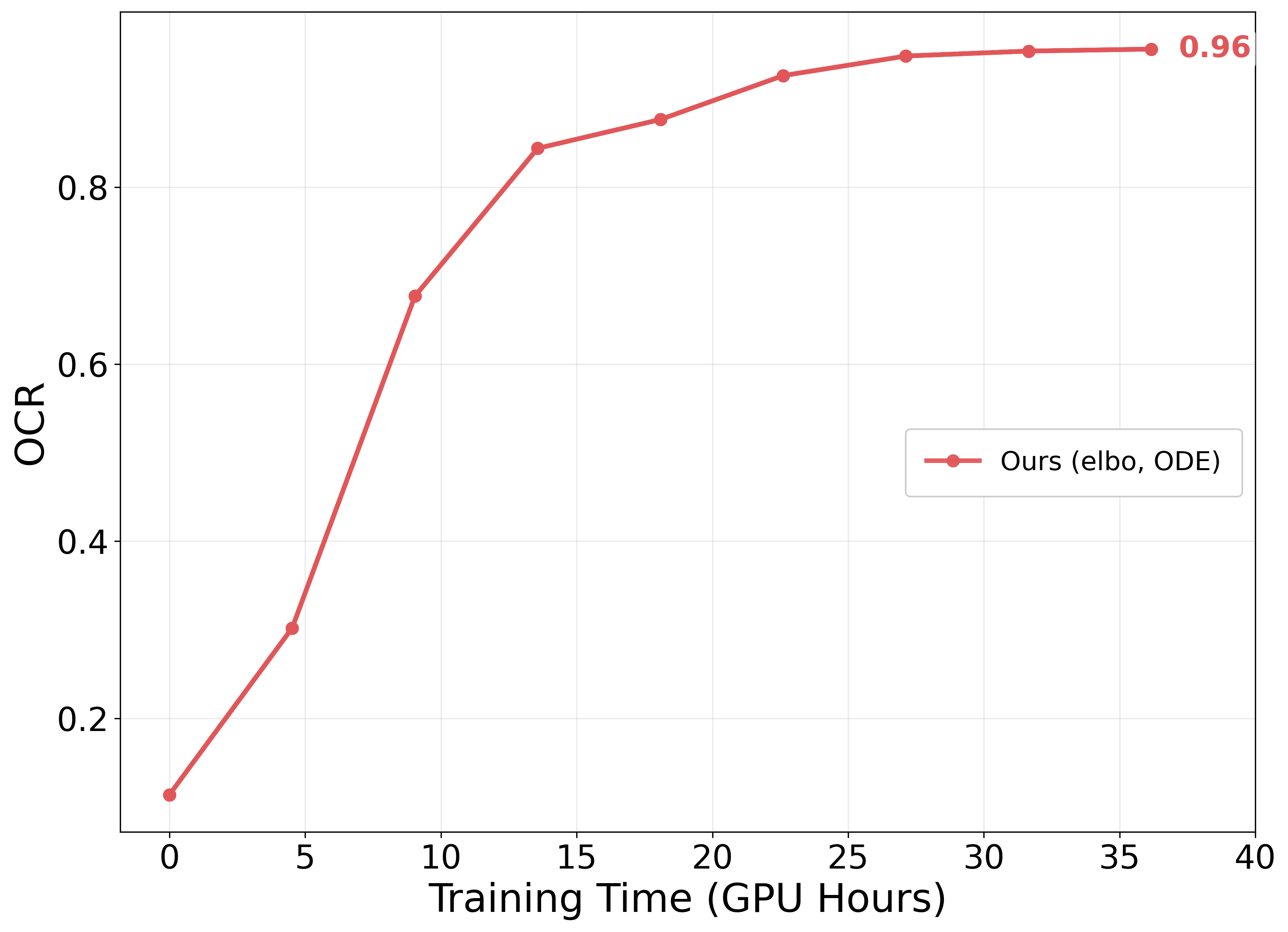}
      \label{fig:train_ocr}
    \end{subfigure}
    \begin{subfigure}[t]{0.48\textwidth}
      \centering
        \includegraphics[width=\linewidth]{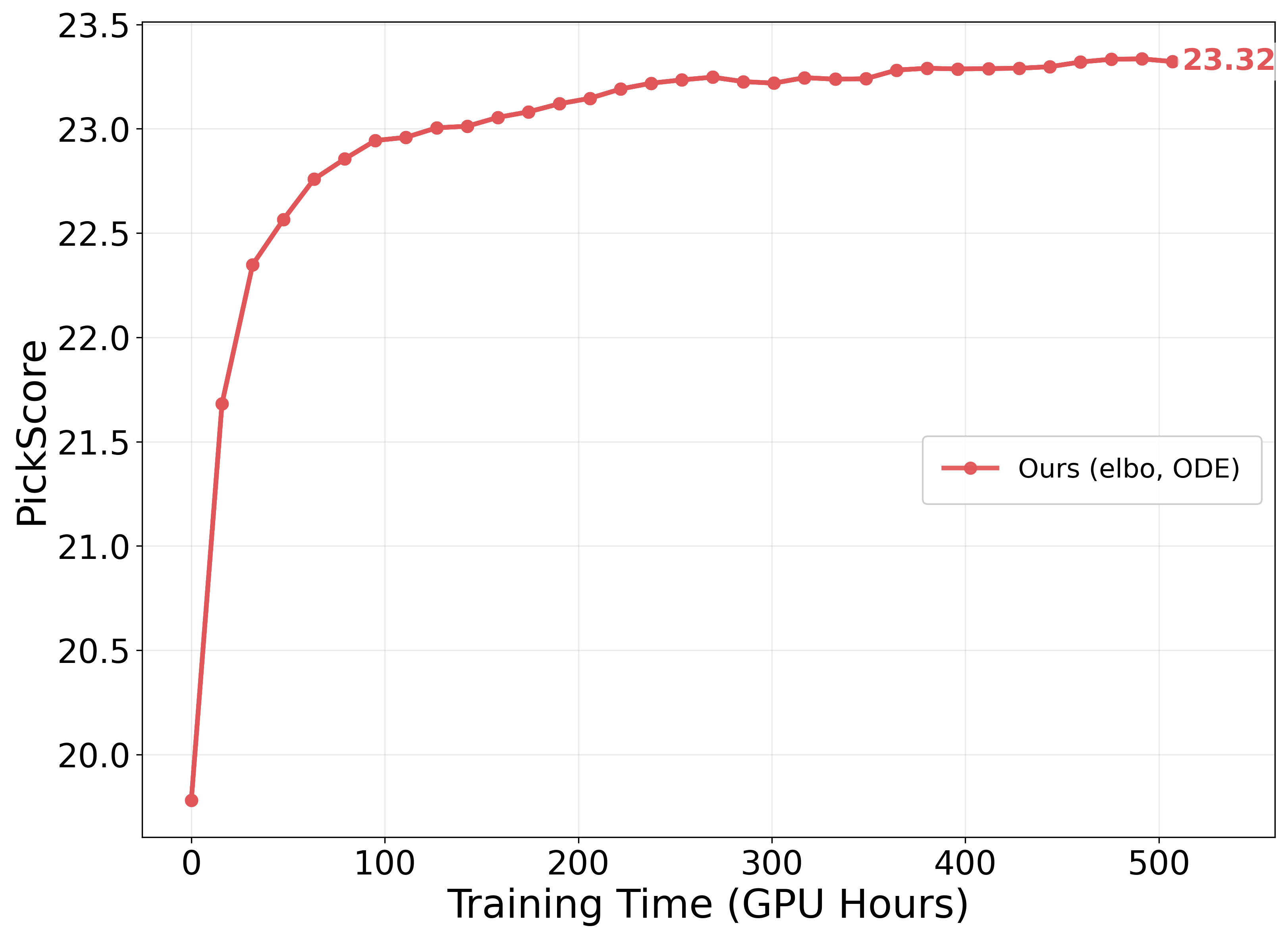}
      \label{fig:train_pickscore}
    \end{subfigure}
    \caption{Performance of \texttt{OCR} (\textit{left}) and \texttt{PickScore} (\textit{right}) across training time.} 
\end{figure*}

\begin{figure}[t]
    \centering
    \includegraphics[width=.85\linewidth]{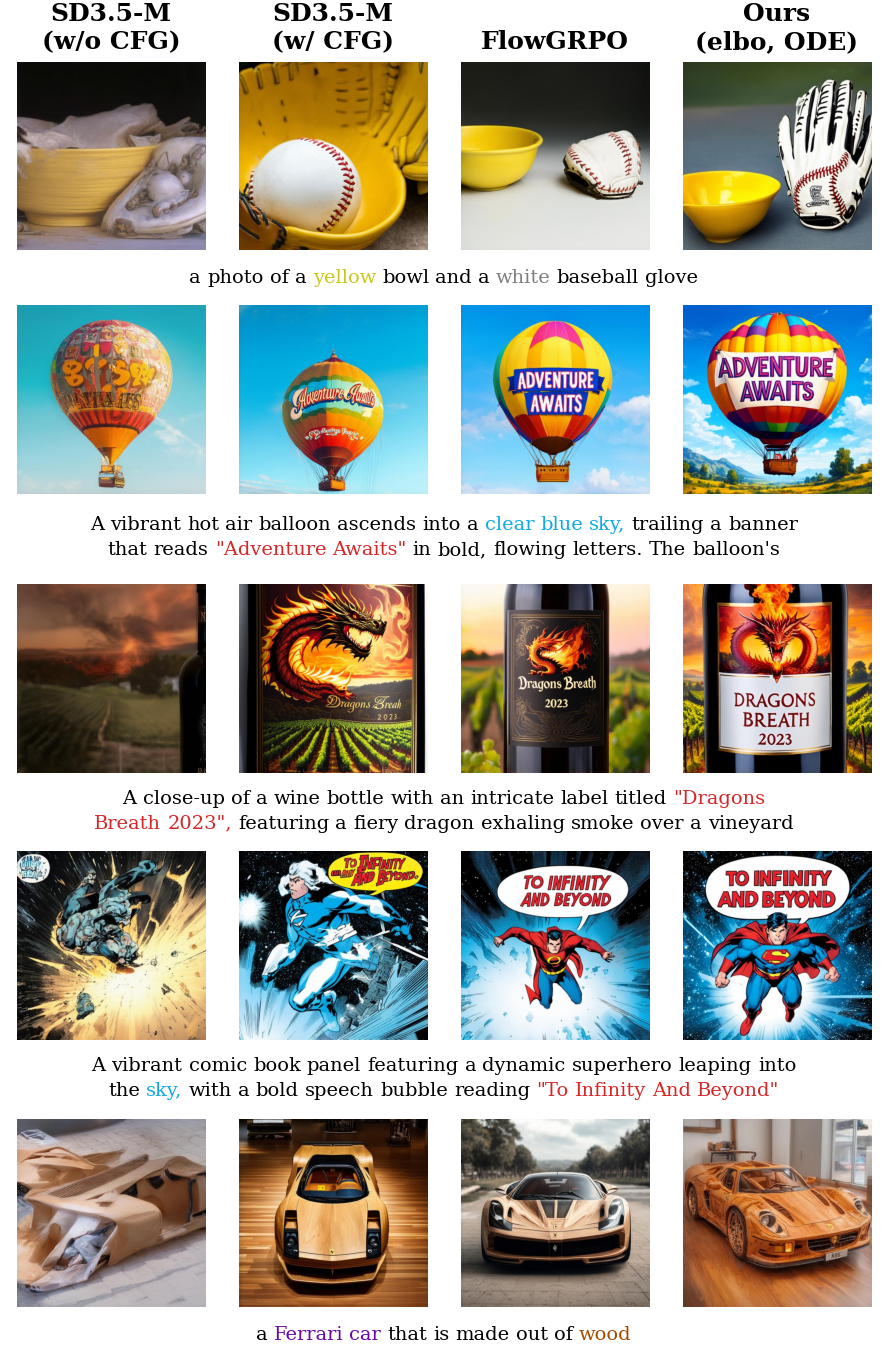}
    \caption{ Qualitative comparison between benchmarks and our model on Geneval, OCR, and PickScore prompts.}
    \label{fig:all_comp2}
\end{figure}

\begin{figure}[t]
    \centering
    \includegraphics[width=.85\linewidth]{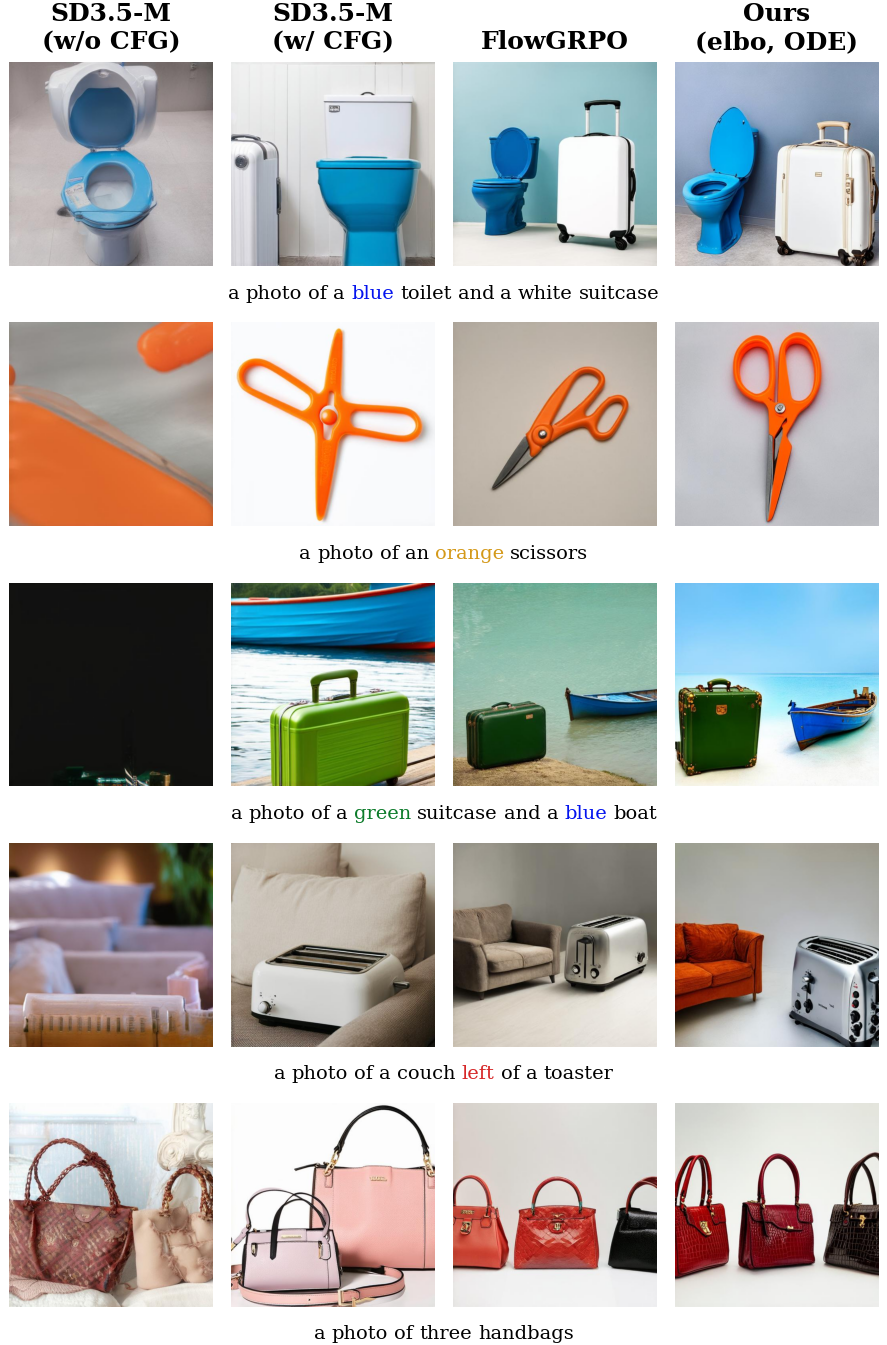}
    \caption{ Qualitative comparison between benchmarks and our model on GenEval prompts.}
    \label{fig:geneval_comp}
\end{figure}

\begin{figure}[t]
    \centering
    \includegraphics[width=.85\linewidth]{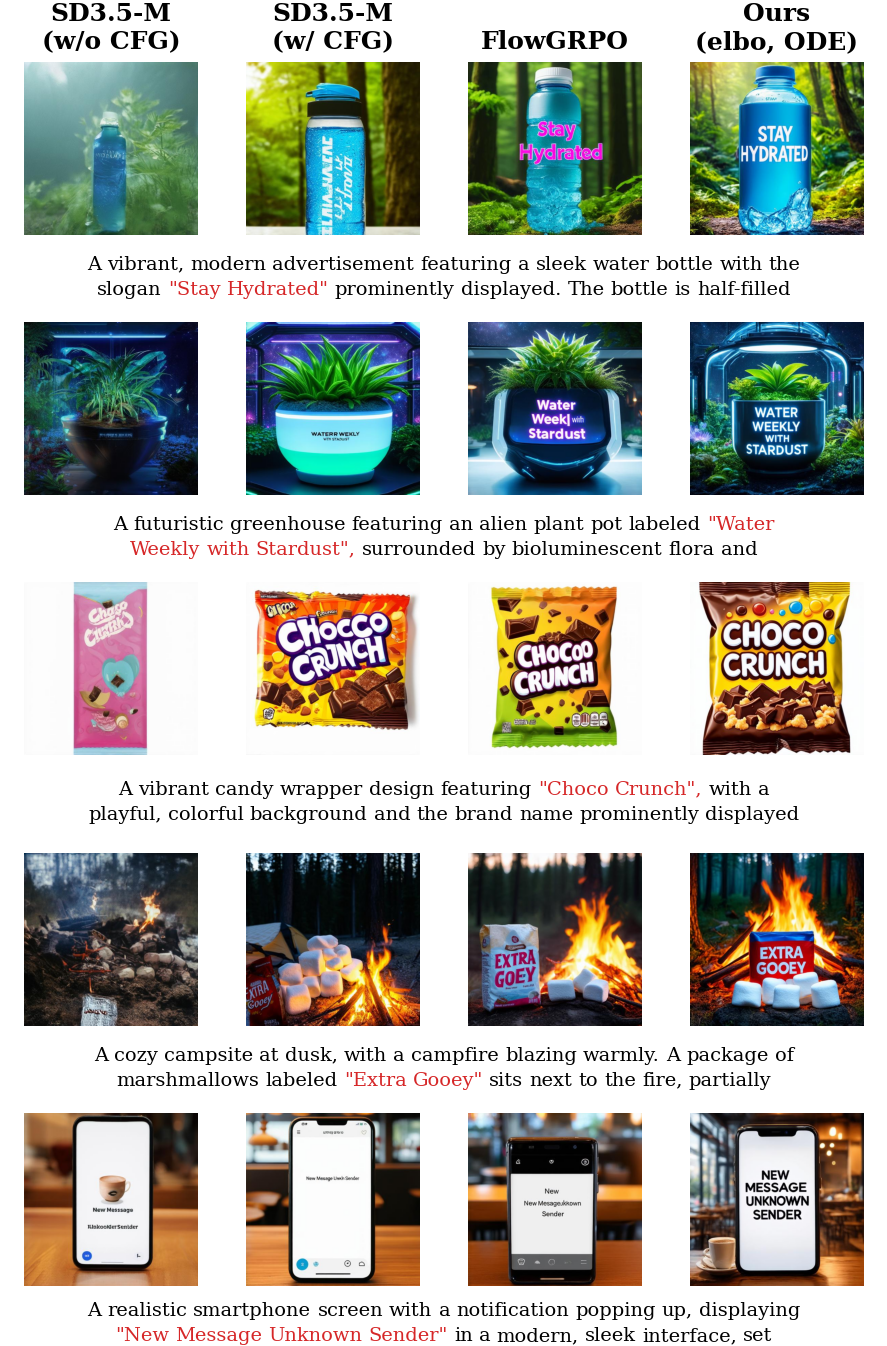}
    \caption{ Qualitative comparison between benchmarks and our model on OCR prompts.}
    \label{fig:ocr_comp}
\end{figure}

\begin{figure}[t]
    \centering
    \includegraphics[width=.85\linewidth]{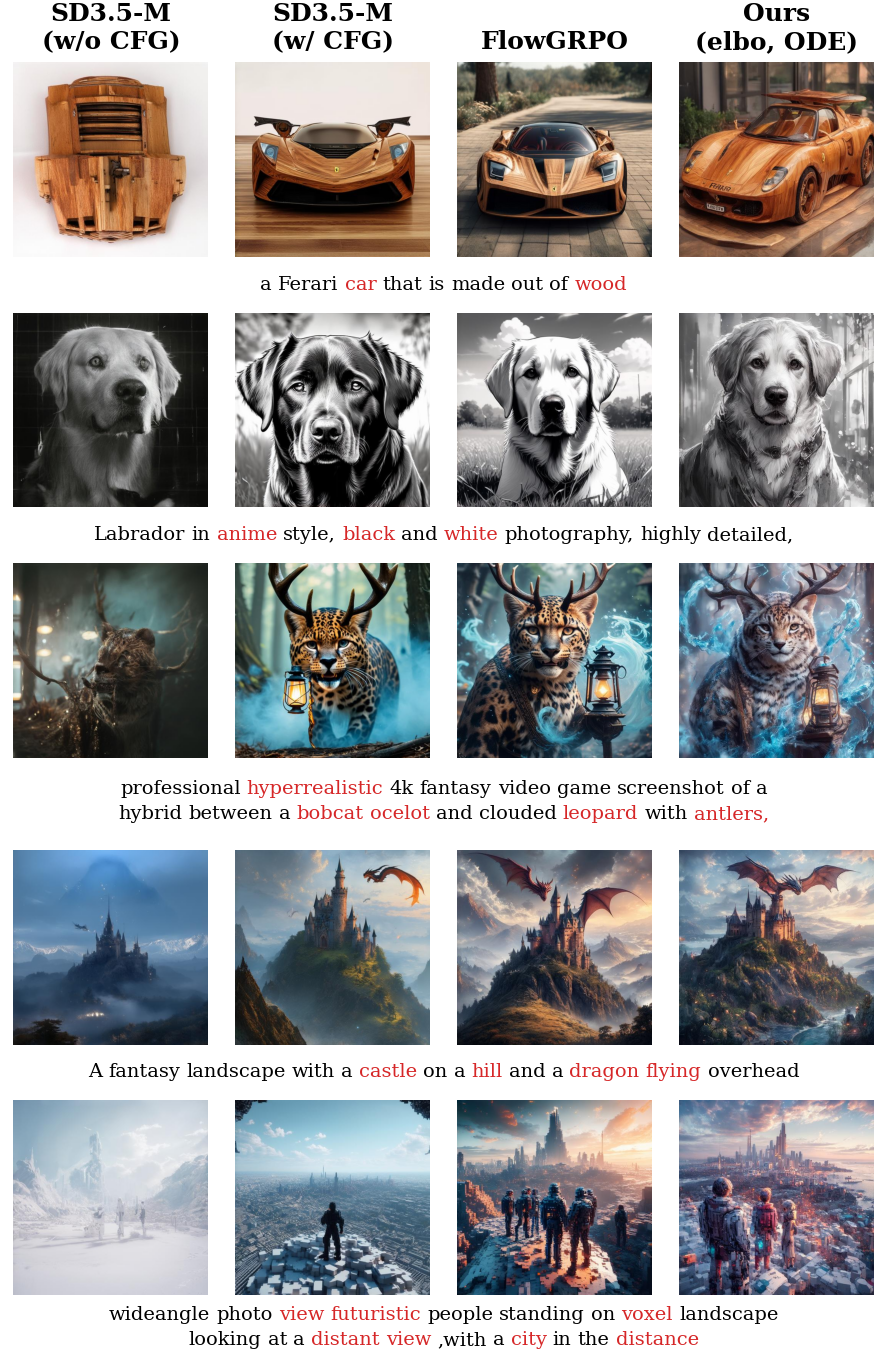}
    \caption{ Qualitative comparison between benchmarks and our model on PickScore prompts.}
    \label{fig:pickscore_comp}
\end{figure}

\begin{figure}[t]
    \centering
    \includegraphics[width=.85\linewidth]{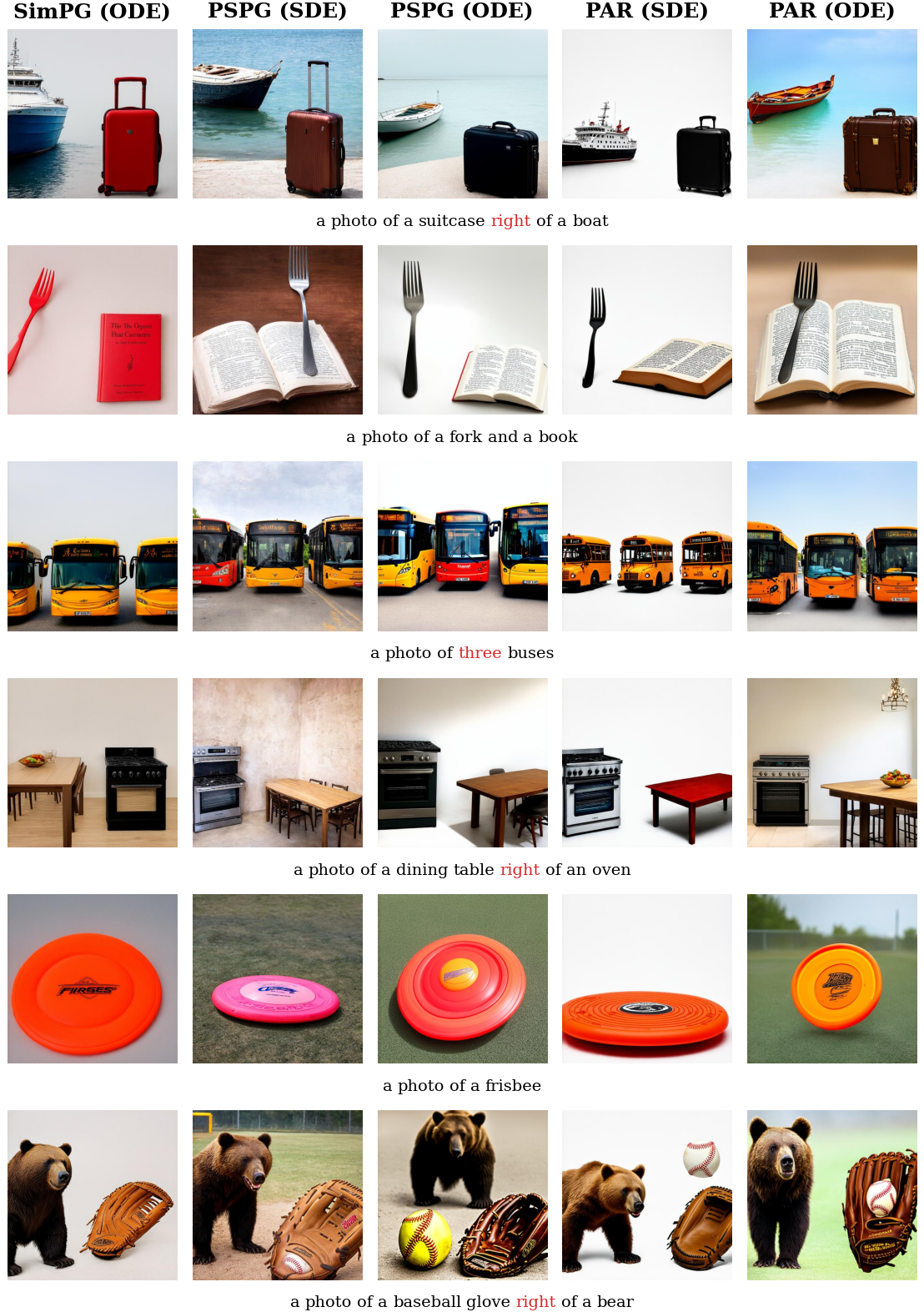}
    \caption{ Qualitative comparison between ELBO-based Likelihood Estimation across variety of loss and samplers.}
    \label{fig:geneval_comp3}
\end{figure}

\begin{table*}[h]
\color{black}
\centering
\caption{\textbf{Generalization evaluation on unseen GenEval categories.}
We evaluate our method on unseen object classes and unseen counting scenarios from GenEval to assess generalization beyond the training distribution. DrawBench results (\cref{tab:eval_results}) additionally serve as a generalization test, as it contains prompts outside the GenEval training distribution.}
\label{tab:generalization}

\setlength{\tabcolsep}{6pt}
\renewcommand{\arraystretch}{1.15}

\begin{tabular}{l cccccc}
\toprule
Method & Overall & Single Obj. & Two Obj. & Counting & Color & Position \\
\midrule
SD3.5-M              & 0.64 & 0.96 & 0.73 & 0.53 & 0.87 & 0.26 \\
+Flow-GRPO           & 0.90 & 1.00 & 0.94 & 0.86 & 0.97 & 0.84 \\
+Ours (ELBO, ODE)    & \textbf{0.93} & \textbf{1.00} & \textbf{0.96} & \textbf{0.91} & \textbf{0.99} & \textbf{0.91} \\
\bottomrule
\end{tabular}
\end{table*}

\begin{table*}[h]
\color{black}
\centering
\caption{\textbf{Evaluation on FLUX.1-Dev.}
To demonstrate that our approach generalizes beyond SD3.5-Medium, we apply our method (PEPG with ELBO-based likelihood estimation and ODE sampling) to FLUX.1-Dev and report results across standard evaluation metrics.}
\label{tab:flux_results}

\setlength{\tabcolsep}{6pt}
\renewcommand{\arraystretch}{1.15}

\begin{tabular}{ll ccccccc}
\toprule
Model & Method & GenEval & PickScore & ClipScore & HPSv2.1 & Aesthetic & ImgRwd \\
\midrule
FLUX.1-Dev (w/ CFG) & Baseline         & 0.66 & 22.84 & 0.295 & 0.274 & 5.71 & 0.96 \\
FLUX.1-Dev          & Ours (ELBO, ODE) & \textbf{0.93} & \textbf{23.29} & \textbf{0.304} & \textbf{0.279} & 5.60 & \textbf{1.18} \\
\bottomrule
\end{tabular}
\end{table*}


\begin{table*}[t]
\color{black}
\centering
\caption{\textbf{Multi-seed results (mean $\pm$ std over 3 runs).}
Our original single-seed reporting followed the evaluation protocol of FlowGRPO, DiffusionNFT, and AWM, which also report single-seed results. We conducted 3 independent runs for PEPG and PAR in the ELBO+ODE setting. Both methods show very low variance across seeds, and the confidence intervals fully overlap, which directly supports our main claim: different loss formulations achieve comparable performance, and the precise choice of loss is not the dominant factor.}
\label{tab:multi_seed}

\setlength{\tabcolsep}{5pt}
\renewcommand{\arraystretch}{1.15}

\begin{tabular}{l ccccc}
\toprule
Loss & GenEval & PickScore & ClipScore & HPSv2.1 & Aesthetic \\
\midrule
PEPG & $0.95 \pm 0.01$ & $22.55 \pm 0.35$ & $0.302 \pm 0.004$ & $0.269 \pm 0.018$ & $5.35 \pm 0.06$ \\
PAR  & $0.96 \pm 0.00$ & $22.71 \pm 0.24$ & $0.302 \pm 0.001$ & $0.263 \pm 0.009$ & $5.35 \pm 0.06$ \\
\bottomrule
\end{tabular}
\end{table*}

\begin{table*}[t]
\color{black}
\centering
\caption{\textbf{GPU hours to reach GenEval $\geq 0.95$.}
Computational cost to reach GenEval $\geq 0.95$. ELBO-based methods with an ODE sampler reach the target score using substantially fewer GPU hours than trajectory-based EPG.}
\label{tab:gpu_hours}

\setlength{\tabcolsep}{6pt}
\renewcommand{\arraystretch}{1.15}

\begin{tabular}{l ccc}
\toprule
 & PEPG (ELBO, ODE) & PAR (ELBO, ODE) & EPG (Traj.) \\
\midrule
GPU Hours & $90.0 \pm 4.5$ & $101.7 \pm 9.6$ & $750.3 \pm 69.2$ \\
GenEval   & $0.95$         & $0.95$          & $0.93$ \\
\bottomrule
\end{tabular}
\end{table*}

{Regarding the efficiency comparison between ELBO+ODE and trajectory+SDE, we report the GPU hours to reach GenEval $\geq 0.95$ across 3 independent runs in \cref{tab:multi_seed}. ELBO+ODE reaches this target in approximately $90$--$100$ GPU hours, compared to $750.3 \pm 69.2$ for EPG (Traj.) and $423$ for FlowGRPO, both of which use the trajectory estimator with SDE sampling. This gap is far too large to be attributed to seed variance and represents a qualitatively different regime of computational cost.}

\end{document}